\theoremstyle{definition}
\newtheorem{definition}{Definition}
\newtheorem{theorem}{Theorem}
\newtheorem{corollary}{Corollary}
\newtheorem{lemma}{Lemma}
\newcommand{\del}[1]{}
\newcommand\norm[1]{\lVert#1\rVert}
\newcommand{\citet}[1]{\citeauthor{#1} \shortcite{#1}}
\newcommand{\mb}[1]{\mathbf{#1}}
\newcommand{\mc}[1]{\mathcal{#1}}
\newcommand{\mbb}[1]{\mathbb{#1}}
\title{Privacy-Preserving Gradient Boosting Decision Trees}
\author{
Qinbin Li,\textsuperscript{\rm 1}
Zhaomin Wu,\textsuperscript{\rm 1}
Zeyi Wen,\textsuperscript{\rm 2}
Bingsheng He\textsuperscript{\rm 1}\\
\textsuperscript{\rm 1}National University of Singapore\\
\textsuperscript{\rm 2}The University of Western Australia\\
\{qinbin, zhaomin, hebs\}@comp.nus.edu.sg, zeyi.wen@uwa.edu.au
}
\begin{document}

\maketitle

\begin{abstract}
The Gradient Boosting Decision Tree (GBDT) is a popular machine learning model for various tasks in recent years. In this paper, we study how to improve model accuracy of GBDT while preserving the strong guarantee of differential privacy. \textit{Sensitivity} and \textit{privacy budget} are two key design aspects for the effectiveness of differential private models. Existing solutions for GBDT with differential privacy suffer from the significant accuracy loss due to too loose sensitivity bounds and ineffective privacy budget allocations (especially across different trees in the GBDT model). Loose sensitivity bounds lead to more noise to obtain a fixed privacy level. Ineffective privacy budget allocations worsen the accuracy loss especially when the number of trees is large. Therefore, we propose a new GBDT training algorithm that achieves tighter sensitivity bounds and more effective noise allocations. Specifically, by investigating the property of gradient and the contribution of each tree in GBDTs, we propose to adaptively control the gradients of training data for each iteration and leaf node clipping in order to tighten the sensitivity bounds. Furthermore, we design a novel boosting framework to allocate the privacy budget between trees so that the accuracy loss can be further reduced. Our experiments show that our approach can achieve much better model accuracy than other baselines.
\end{abstract}

\section{Introduction}

Gradient Boosting Decision Trees (GBDTs) have achieved state-of-the-art results on many challenging machine learning tasks such as click prediction~\cite{richardson2007predicting}, learning to rank~\cite{burges2010ranknet}, and web page classification~\cite{pennacchiotti2011machine}. The algorithm builds a number of decision trees one by one, where each tree tries to fit the residual of the previous trees. With the development of efficient GBDT libraries~\cite{chen2016xgboost,ke2017lightgbm,prokhorenkova2018catboost,wenthundergbm19}, the GBDT model has won many awards in recent machine learning competitions and has been widely used both in the academics and in the industry~\cite{he2014practical,zhou2017psmart,ke2017lightgbm,jiang2018dimboost,feng2018multi}. 

Privacy issues have been a hot research topic recently~\cite{shokri2017membership,truex2018towards,fredrikson2015model,li2019federated,li2019practical}. Due to the popularity and wide adoptions of GBDTs, a privacy-preserving GBDT algorithm is particularly timely and necessary. Differential privacy~\cite{dwork2011differential} was proposed to protect the individuals of a dataset. In short, a computation is differentially private if the probability of producing a given output does not depend much on whether a particular record is included in the input dataset. Differential privacy has been widely used in many machine learning models such as logistic regression~\cite{chaudhuri2009privacy} and neural networks~\cite{abadi2016deep,acs2018differentially}. \textit{Sensitivity} and \textit{privacy budget} are two key design aspects for the effectiveness of differential private models. Many practical differentially private models achieve good model quality by deriving tight sensitivity bounds and allocating privacy budget effectively. In this paper, we study how to improve model accuracy of GBDTs while preserving the strong guarantee of differential privacy.  

There have been some potential solutions for improving the effectiveness of differentially private GBDTs (e.g.,~\cite{zhao2018inprivate,liu2018differentially,xiang2018collaborative}).  However, they can suffer from the significant accuracy loss due to too loose sensitivity bounds and ineffective privacy budget allocations (especially across different trees in the GBDT model).

\emph{Sensitivity bounds:} The previous studies on individual decision trees~\cite{friedman2010data,mohammed2011differentially,liu2018differentially} bound the sensitivities by estimating the range of the function output. However, this method leads to very loose sensitivity bounds in GBDTs, because the range of the gain function output ($G$ in Equation~\eqref{eq:gain} introduced Section~\ref{sec:background}) is related to the number of instances and the range can be potentially very huge for large data sets. Loose sensitivity bounds lead to more noise to obtain a fixed privacy level, and cause huge accuracy loss. 

\emph{Privacy budget allocations:} There have been some previous studies on privacy budget allocations among different trees~\cite{liu2018differentially,xiang2018collaborative,zhao2018inprivate}. We can basically divide them into two kinds.  1) The first kind is to allocate the budget equally to each tree using the \textit{sequential composition}~\cite{liu2018differentially,xiang2018collaborative}. When the number of trees is large, the given budget allocated to each tree is very small. The scale of the noises can be proportional to the number of trees, which causes huge accuracy loss. 2) The second kind is to give disjoint inputs to different trees~\cite{zhao2018inprivate}. Then, each tree only needs to satisfy $\varepsilon$-differential privacy using the \textit{parallel composition}. When the number of trees is large, since the inputs to the trees cannot be overlapped, the number of instances assigned to a tree can be quite small. As a result, the tree can be too weak to achieve meaningful learnt models.

We design a new GBDT training algorithm to address the above-mentioned limitations.
\begin{itemize}
    \item In order to obtain a tighter sensitivity bound, we propose Gradient-based Data Filtering (GDF) to guarantee the bounds of the sensitivities, and further propose Geometric Leaf Clipping (GLC) to obtain a closer bound on sensitivities taking advantage of the tree learning systems in GBDT.
    \item Combining both sequential and parallel compositions, we design a novel boosting framework to well exploit the privacy budget of GBDTs and the effect of boosting. Our approach satisfies differential privacy while improving the model accuracy with boosting. 
    \item We have implemented our approach (named DPBoost) based on a popular library called LightGBM~\cite{ke2017lightgbm}. Our experimental results show that DPBoost is much superior to the other approaches and can achieve competitive performance compared with the ordinary LightGBM.
\end{itemize}

\section{Preliminaries}\label{sec:background}

\subsection{Gradient Boosting Decision Trees}

The GBDT is an ensemble model which trains a number of decision trees in a sequential manner. Formally, given a convex loss function $l$ and a dataset with $n$ instances and $d$ features $\{(\mathbf{x}_i, y_i)\}_{i=1}^n (\mathbf{x}_i\in \mathbb{R}^d, y_i\in \mathbb{R})$, 
GBDT minimizes the following regularized objective~\cite{chen2016xgboost}.
\begin{equation}
    \mathcal{\tilde{L}} = \sum_il(\hat{y}_i,y_i)+\sum_k \Omega(f_k) 
\end{equation}
where $\Omega(f)= \frac{1}{2}\lambda \norm{V}^2$ is a regularization term. Here $\lambda$ is the regularization parameter and $V$ is the leaf weight. Each $f_k$ corresponds to a decision tree.
Forming an approximate function of the loss, GBDT minimizes the following objective function at the $t$-th iteration~\cite{si2017gradient}.
\begin{equation}
    \mathcal{\tilde{L}}^{(t)} = \sum_{i=1}^{n} [g_i f_t(\mathbf{x}_i)+\frac{1}{2} f_t^2(\mathbf{x}_i)]+\Omega(f_t)
\label{eq:xg_loss}
\end{equation}

where $g_i = \partial_{\hat{y}^{(t-1)}}l(y_i,\hat{y}^{(t-1)})$ is first order gradient statistics on the loss function. The decision tree is built from the root until reaching the maximum depth. Assume $I_L$ and $I_R$ are the instance sets of left and right nodes after a split. Letting $I = I_L \cup I_R$, the gain of the split is given by

\begin{equation}
    G(I_L, I_R) = \frac{(\sum_{i\in I_L}g_i)^2}{|I_L|+\lambda}+\frac{(\sum_{i\in I_R}g_i)^2}{|I_R|+\lambda}
\label{eq:gain}
\end{equation}

GBDT traverses all the feature values to find the split that maximizes the gain. If the current node does not meet the requirements of splitting (e.g., achieve the max depth or the gain is smaller than zero), it becomes a leaf node and the optimal leaf value is given by

\begin{equation}
    V(I) = -\frac{\sum_{i \in I}g_i}{|I| + \lambda}
\label{eq:leaf}
\end{equation}

Like the learning rate in stochastic optimization, a shrinkage rate $\eta$~\cite{friedman2002stochastic} is usually applied to the leaf values, which can reduce the influence of each individual tree and leave space for future trees to improve the model.
\subsection{Differential Privacy}
Differential privacy~\cite{dwork2011differential} is a popular standard of privacy protection with provable privacy guarantee. It guarantees that the probability of producing a given output does not depend much on whether a particular record is included in the input dataset or not.

\begin{definition} ($\varepsilon$-Differential Privacy)
Let $\varepsilon$ be a positive real number and $f$ be a randomized function. The function $f$ is said to provide $\varepsilon$-differential privacy if, for any two datasets $D$ and $D'$ that differ in a single record and any output $O$ of function $f$,
\begin{equation}
    Pr[f(D)\in O]\leq e^\varepsilon \cdot Pr[f(D')\in O]
\end{equation}
\end{definition}

Here $\varepsilon$ is a privacy budget. To achieve $\varepsilon$-differential privacy, the Laplace mechanism and exponential mechanism~\cite{dwork2014algorithmic} are usually adopted by adding noise calibrated to the \textit{sensitivity} of a function. 

\begin{definition} (Sensitivity)
Let $f:\mathcal{D} \rightarrow \mathcal{R}^d$ be a function. The sensitivity of $f$ is
		\begin{equation}
		    \Delta f = \max_{D,D'\in \mathcal{D}}\norm{f(D)-f(D')}_1
		\end{equation}
where $D$ and $D'$ have at most one different record. 
\label{def:sen}
\end{definition}

\begin{theorem} (Laplace Mechanism)
Let $f:\mathcal{D} \rightarrow \mathcal{R}^d$ be a function. The Laplace Mechanism $F$ is defined as
		\begin{equation}
		    F(D)=f(D)+Lap(0, \Delta f/\varepsilon)
		\end{equation}
		where the noise $Lap(0, \Delta f/\varepsilon)$ is drawn from a Laplace distribution with mean zero and scale $\Delta f/\epsilon$. Then $F$ provides $\epsilon$-differential privacy.
\end{theorem}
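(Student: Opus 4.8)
The plan is to reduce the claim to a pointwise comparison of the output densities of $F(D)$ and $F(D')$ for any two datasets differing in a single record, and then integrate over the output set $O$. Fix such $D, D'$, write $\mathbf{z} = (z_1,\dots,z_d)\in\mathbb{R}^d$ for a candidate output, and let $b = \Delta f/\varepsilon$ denote the noise scale. Since the additive noise is a $d$-dimensional Laplace vector with independent coordinates, the density of $F(D)$ at $\mathbf{z}$ factorizes as $p_D(\mathbf{z}) = \prod_{i=1}^d \frac{1}{2b}\exp\!\left(-|z_i - f(D)_i|/b\right)$, and similarly $p_{D'}(\mathbf{z})$ with $f(D')$ in place of $f(D)$.

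The first step is to form the ratio $p_D(\mathbf{z})/p_{D'}(\mathbf{z})$: the normalizing constants $1/(2b)$ cancel coordinatewise, leaving $\exp\!\left(\tfrac{1}{b}\sum_{i=1}^d \big(|z_i - f(D')_i| - |z_i - f(D)_i|\big)\right)$. The key step is then the reverse triangle inequality applied in each coordinate, $|z_i - f(D')_i| - |z_i - f(D)_i| \le |f(D)_i - f(D')_i|$. Summing over $i$ and invoking Definition~\ref{def:sen} gives $\sum_{i=1}^d |f(D)_i - f(D')_i| = \norm{f(D) - f(D')}_1 \le \Delta f$; substituting $b = \Delta f/\varepsilon$ yields $p_D(\mathbf{z})/p_{D'}(\mathbf{z}) \le e^{\varepsilon}$ for every $\mathbf{z}$.

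Finally I would integrate this pointwise bound over an arbitrary measurable region $O$, obtaining $Pr[F(D)\in O] = \int_O p_D(\mathbf{z})\,d\mathbf{z} \le e^{\varepsilon}\int_O p_{D'}(\mathbf{z})\,d\mathbf{z} = e^{\varepsilon}\,Pr[F(D')\in O]$, which is exactly the inequality in the definition of $\varepsilon$-differential privacy, completing the proof.

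The main obstacle here is conceptual rather than computational: one must make explicit that the $d$-dimensional noise is drawn with i.i.d.\ Laplace coordinates (so that the density factorizes and the constants cancel cleanly) and that $O$ ranges over measurable sets, since the density-ratio estimate only controls the probabilities after integration. The remaining ingredients — cancellation of normalizing constants and the triangle inequality — are routine.
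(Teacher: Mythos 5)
Your proof is correct: it is the standard density-ratio argument for the Laplace mechanism (factorize the i.i.d.\ Laplace density, bound the coordinatewise ratio via the triangle inequality and the $\ell_1$ sensitivity, then integrate over the measurable output set). The paper does not prove this statement at all --- it is quoted as a background result from the differential privacy literature --- and your argument coincides with the canonical textbook proof of that result.
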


\begin{theorem} (Exponential Mechanism)
Let $u:(\mc{D}\times \mc{R}) \rightarrow \mbb{R}$ be a utility function. The exponential mechanism $F$ is defined as

\begin{equation}
\begin{aligned}
    F(D,u) = &\text{ choose } r \in \mc{R} \text{ with}\\
    &\text{probability }\propto exp(\frac{\varepsilon u(D,r)}{2\Delta u})
\end{aligned}
\end{equation}
Then $F$ provides $\varepsilon$-differential privacy.
\end{theorem}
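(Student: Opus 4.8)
The plan is to follow the standard pointwise-ratio argument for the exponential mechanism. Fix two datasets $D$ and $D'$ that differ in a single record, and fix an arbitrary target output $r \in \mathcal{R}$. Write the selection probability (or density, in the continuous case) of $F(D,u)$ at $r$ as $p_D(r) = \exp(\varepsilon u(D,r)/(2\Delta u)) / Z_D$, where $Z_D = \sum_{r' \in \mathcal{R}} \exp(\varepsilon u(D,r')/(2\Delta u))$ is the normalizing constant (an integral if $\mathcal{R}$ is continuous). Here $\Delta u = \max_{D \sim D'} \max_{r} |u(D,r) - u(D',r)|$ is the sensitivity of $u$ in its dataset argument, in the sense of Definition~\ref{def:sen}.

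First I would bound the ratio $p_D(r)/p_{D'}(r)$ by splitting it into two factors: the ratio of numerators and the ratio $Z_{D'}/Z_D$ of normalizers. For the numerator ratio, since $|u(D,r) - u(D',r)| \le \Delta u$, we get $\exp\bigl(\varepsilon(u(D,r)-u(D',r))/(2\Delta u)\bigr) \le \exp(\varepsilon/2)$. For the normalizer ratio, I would apply the same bound termwise: $u(D',r') \le u(D,r') + \Delta u$ for every $r'$, hence $Z_{D'} \le \exp(\varepsilon/2)\, Z_D$, i.e. $Z_{D'}/Z_D \le \exp(\varepsilon/2)$. Multiplying the two factors yields $p_D(r)/p_{D'}(r) \le \exp(\varepsilon)$ for every $r$.

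Finally, to recover the definitional statement for an arbitrary (measurable) output event $O \subseteq \mathcal{R}$, I would integrate (or sum) the pointwise bound: $Pr[F(D,u) \in O] = \int_O p_D(r)\,dr \le e^{\varepsilon} \int_O p_{D'}(r)\,dr = e^{\varepsilon} Pr[F(D',u)\in O]$, which is exactly $\varepsilon$-differential privacy.

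I do not anticipate a real obstacle: the only point requiring care is the treatment of the normalizing constants, since $Z_D$ and $Z_{D'}$ differ and one must bound their ratio rather than cancel them — and, relatedly, checking that the argument carries over verbatim when $\mathcal{R}$ is continuous by working with densities and integrals in place of probability masses and sums. One should also record the implicit assumption that $\Delta u$ is finite and strictly positive, so that the exponents are well defined.
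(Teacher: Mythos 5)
Your proof is correct: it is the standard McSherry--Talwar argument, with the ratio $p_D(r)/p_{D'}(r)$ split into the numerator factor and the normalizer factor $Z_{D'}/Z_D$, each bounded by $e^{\varepsilon/2}$, and then integrated over the output event; your side remarks about the continuous case and about $\Delta u$ being finite and positive are the right caveats. Note that the paper itself offers no proof of this statement --- it is quoted as a known background theorem from the differential-privacy literature (Dwork et al.) --- so there is no in-paper argument to compare against; your write-up simply supplies the canonical proof.
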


The above-mentioned mechanisms provide privacy guarantees for a single function. For an algorithm with multiple functions, there are two privacy budget composition theorems~\cite{dwork2014algorithmic}.

\begin{theorem} (Sequential Composition)
Let $f=\{f_1, ..., f_m\}$ be a series of functions performed sequentially on a dataset. If $f_i$ provides $\varepsilon_i$-differential privacy, then $f$ provides $\sum_{i=1}^m \varepsilon_i$-differential privacy.
\end{theorem}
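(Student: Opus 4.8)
The plan is to prove the statement by a direct calculation on the joint output distribution of the composed mechanism, using the chain rule of conditional probability. Fix two datasets $D$ and $D'$ that differ in a single record, and write the output of $f$ as the tuple $(r_1,\ldots,r_m)$, where $r_i$ is the output of $f_i$. It suffices to bound $\Pr[f(D)\in O]/\Pr[f(D')\in O]$ for product sets $O = O_1\times\cdots\times O_m$ in the joint range $\mathcal{R}_1\times\cdots\times\mathcal{R}_m$ and then extend to an arbitrary (measurable) $O$ by the standard argument that product sets generate the product $\sigma$-algebra; in the discrete case this extension is immediate by summation.

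First I would express the joint probability (or density) of $f(D)$ as a telescoping product of conditionals,
\begin{equation}
\Pr[f(D) = (r_1,\ldots,r_m)] = \prod_{i=1}^m \Pr[f_i(D) = r_i \mid f_1(D) = r_1,\ldots,f_{i-1}(D) = r_{i-1}],
\end{equation}
and the analogous identity for $D'$. The key observation is that, conditioned on the earlier realized outputs $r_1,\ldots,r_{i-1}$, the map $D\mapsto f_i(D)$ is still a randomized function of the dataset satisfying $\varepsilon_i$-differential privacy, so each conditional factor for $D$ is at most $e^{\varepsilon_i}$ times the corresponding factor for $D'$.

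Then I would multiply these $m$ bounds together to obtain the pointwise inequality
\begin{equation}
\Pr[f(D) = (r_1,\ldots,r_m)] \le \Big(\prod_{i=1}^m e^{\varepsilon_i}\Big)\,\Pr[f(D') = (r_1,\ldots,r_m)] = e^{\sum_{i=1}^m \varepsilon_i}\,\Pr[f(D') = (r_1,\ldots,r_m)],
\end{equation}
and finally integrate (or sum) this inequality over $(r_1,\ldots,r_m)\in O$ to conclude $\Pr[f(D)\in O] \le e^{\sum_{i=1}^m \varepsilon_i}\,\Pr[f(D')\in O]$, which is exactly $\big(\sum_{i=1}^m \varepsilon_i\big)$-differential privacy.

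I expect the main subtlety — more a point requiring care than a genuine obstacle — to be justifying the conditioning step: one must argue that the $\varepsilon_i$-DP guarantee of $f_i$ persists after conditioning on the realized values of $f_1,\ldots,f_{i-1}$. When the mechanisms draw independent randomness this is trivial, since the conditional law of $f_i(\cdot)$ equals its unconditional law; more generally this is precisely what the (adaptive) definition of composed differential privacy supplies, and the proof goes through verbatim. A secondary technical point is the passage from probability masses to densities when the ranges $\mathcal{R}_i$ are continuous, which is routine but worth acknowledging explicitly.
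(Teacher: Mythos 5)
The paper does not prove this statement: Sequential Composition is stated as a preliminary and attributed to the differential-privacy literature (Dwork and Roth), so there is no in-paper proof to compare against. Your argument is the standard textbook proof — factor the joint output law by the chain rule, bound each conditional factor by $e^{\varepsilon_i}$ using the per-mechanism guarantee, multiply, and integrate over the output event — and it is correct, including your identification of the one real point of care (that the $\varepsilon_i$-DP bound survives conditioning on earlier outputs, trivially under independent randomness and by definition in the adaptive setting). One small remark: the opening reduction to product sets $O_1\times\cdots\times O_m$ is unnecessary, and extending an inequality between two measures from a generating class to the full $\sigma$-algebra is not as immediate as "product sets generate the product $\sigma$-algebra" would suggest (the class of sets satisfying such an inequality need not be a $\sigma$-algebra); fortunately your actual derivation establishes the pointwise mass/density bound and integrates it over an arbitrary measurable $O$, which yields the conclusion directly without that detour.
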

\begin{theorem} (Parallel Composition)
Let $f=\{f_1, ..., f_m\}$ be a series of functions performed separately on disjoint subsets of the entire dataset. If $f_i$ provides $\varepsilon_i$-differential privacy, then $f$ provides $\max(\varepsilon_1, ..., \varepsilon_m)$-differential privacy.
\end{theorem}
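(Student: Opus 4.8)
The plan is to reduce the joint privacy guarantee to the single subset that actually contains the differing record, exploiting that the subsets are disjoint. First I would fix any two datasets $D$ and $D'$ differing in a single record, and write $D_1,\dots,D_m$ for the disjoint pieces of $D$ on which $f_1,\dots,f_m$ respectively operate, and $D'_1,\dots,D'_m$ for the corresponding pieces of $D'$. Since the pieces are pairwise disjoint and $D,D'$ differ in exactly one record, there is a unique index $j$ with $D_j$ and $D'_j$ neighbouring (differing in that one record), whereas $D_i=D'_i$ for all $i\neq j$.

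Next I would use that the $f_i$ run on disjoint data and therefore have independent outputs, so for any product output event $O=O_1\times\cdots\times O_m$ we have $\Pr[f(D)\in O]=\prod_{i=1}^m\Pr[f_i(D_i)\in O_i]$ and similarly for $D'$. Every factor with $i\neq j$ is identical on the two sides because $D_i=D'_i$; for the remaining factor, the $\varepsilon_j$-differential privacy of $f_j$ applied to the neighbouring pair $D_j,D'_j$ gives $\Pr[f_j(D_j)\in O_j]\le e^{\varepsilon_j}\Pr[f_j(D'_j)\in O_j]$. Re-multiplying yields $\Pr[f(D)\in O]\le e^{\varepsilon_j}\Pr[f(D')\in O]\le e^{\max_i\varepsilon_i}\Pr[f(D')\in O]$, which is precisely $\max(\varepsilon_1,\dots,\varepsilon_m)$-differential privacy; the reverse inequality follows by symmetry.

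The only step requiring genuine care is passing from product output events to an arbitrary measurable output set $O$ in the joint range. I would handle it by expressing $\Pr[f(D)\in O]$ as an iterated integral against the independent coordinate laws, bounding the integrand in the $j$-th coordinate by the single-function guarantee while leaving the other coordinates alone (their laws coincide under $D$ and $D'$), and then invoking a standard monotone-class argument to extend from rectangles to all measurable sets. I expect this measure-theoretic bookkeeping to be the main obstacle; conceptually the proof rests entirely on the simple observation that one changed record can affect only one of the disjoint subsets, so only one of the $\varepsilon_i$ — at worst the largest — is ever ``spent''.
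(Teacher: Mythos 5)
Your argument is correct, but note that the paper itself offers no proof to compare against: parallel composition is stated as a known result imported from the differential-privacy literature (the composition theorems are cited to Dwork and Roth; the parallel version originates with McSherry's PINQ work), and the paper simply applies it to the disjoint per-tree inputs inside an ensemble. Your proof is the standard one for this theorem: locate the unique subset containing the changed record, factor the joint output distribution using the independence of the mechanisms' internal randomness, apply the $\varepsilon_j$ guarantee to that single factor, and observe that $e^{\varepsilon_j}\le e^{\max_i \varepsilon_i}$. Two small points are worth tightening. First, the step ``there is a unique index $j$ with $D_j,D'_j$ neighbouring and $D_i=D'_i$ otherwise'' silently assumes that the assignment of records to subsets is stable, which holds under the add/remove-one-record notion of neighbouring datasets (the one the paper effectively uses, e.g.\ in the proof of Lemma~1 where $I_2=I_1\cup\{\mathbf{x}_s\}$) or when the partition is a fixed rule on records; under a replace-one-record notion with a content-dependent partition the changed record could migrate between subsets, costing $\varepsilon_i+\varepsilon_j$ rather than $\max_i\varepsilon_i$, so it is worth stating the assumption explicitly. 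Second, the measure-theoretic extension is more direct than you suggest: rather than a monotone-class argument from rectangles, you can handle an arbitrary measurable $O$ in one step by Fubini, writing $\Pr[f(D)\in O]$ as an integral over the coordinates $i\neq j$ (whose laws coincide under $D$ and $D'$) of the probability that $f_j(D_j)$ lands in the corresponding slice of $O$, and bounding that slice probability pointwise by $e^{\varepsilon_j}$ times the same quantity for $D'_j$.
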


\section{Our Design: DPBoost}

Given a privacy budget $\varepsilon$ and a dataset with $n$ instances and $d$ features $\mathcal{D}=\{(\mathbf{x}_i, y_i)\}_{i=1}^n (\mathbf{x}_i\in \mathbb{R}^d, y_i\in [-1,1])$, we develop a new GBDT training algorithm named DPBoost to achieve  $\varepsilon$-differential privacy while trying to reduce the accuracy loss. Moreover, like the setting in the previous work~\cite{abadi2016deep}, we consider a strong adversary with full access to the model's parameters. Thus, we also provide differential privacy guarantees for each tree node.

Figure~\ref{fig:framwork} shows the overall framework of DPBoost. We design a novel two-level boosting framework to exploit both sequential composition and parallel composition. Inside an ensemble, a number of trees are trained using the disjoint subsets of data sampled from the dataset. Then, multiple such rounds are trained in a sequential manner. For achieving differential privacy, sequential composition and parallel composition are applied between ensembles and inside an ensemble, respectively. Next, we describe our algorithm in detail, including the techniques to bound sensitivities and effective privacy budget allocations.

\begin{figure}[!t]
% \captionsetup[subfloat]{farskip=2pt,captionskip=1pt}
\centering
\includegraphics[width=0.95\columnwidth]{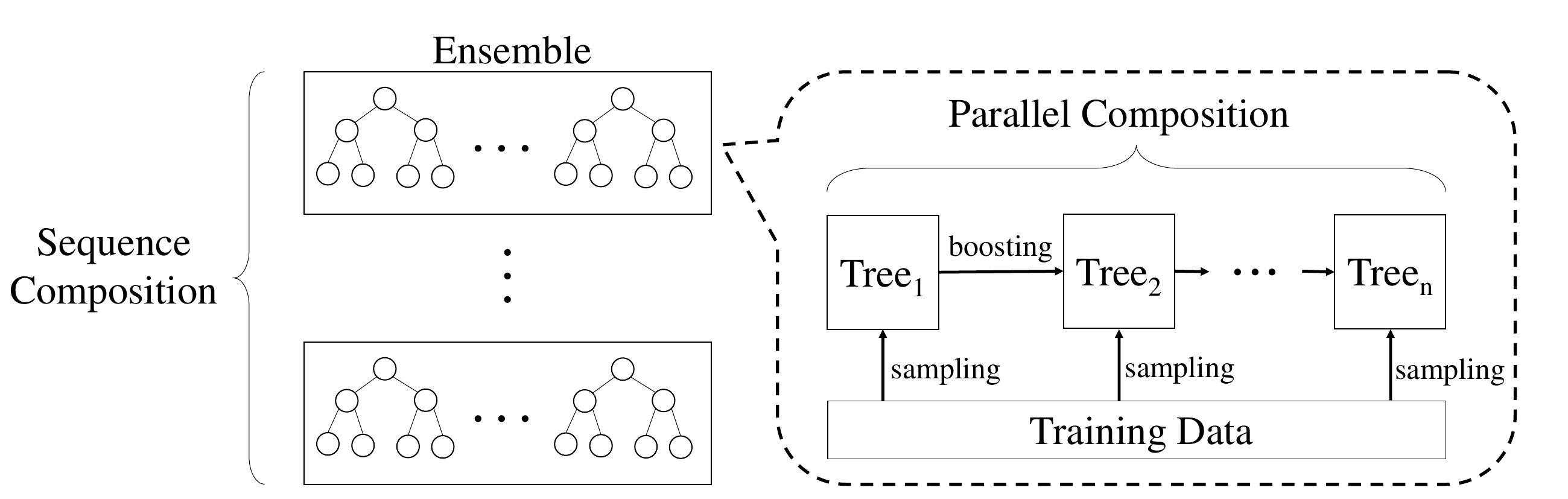}%
\caption{The two-level boosting design of DPBoost}
\label{fig:framwork}
\end{figure}

\subsection{Tighter Sensitivity Bounds} \label{sec:gdf}
% The previous work~\cite{zhao2018inprivate} bound the sensitivities of the nodes in GBDTs by approximately estimating the range of the nodes, which is not strict and elegant enough. 
The previous studies on individual decision trees~\cite{friedman2010data,mohammed2011differentially,liu2018differentially} bound the sensitivities by estimating the range of the function output. For example, if the range of a function output is $[-\gamma, \gamma]$, then the sensitivity of the function is no more than $2\gamma$. However, the range of function $G$ in Equation~\eqref{eq:gain} is related to the number of instances, which can cause very large sensitivity if the dataset is large. Thus, instead of estimating the range, we strictly derive the sensitivity ($\Delta G$ and $\Delta V$) according to Definition~\ref{def:sen}. Their bounds are given in the below two lemmas.

\begin{lemma}
Letting $g^*=max_{i\in \mc{D}} |g_i|$, we have 
$\Delta G \leq 3{g^*}^2.$
\label{lem:sen_gain}
\end{lemma}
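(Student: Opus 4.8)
The plan is to apply Definition~\ref{def:sen} directly, rather than bounding the range of $G$ (which is the loose route the excerpt criticizes, since $|G|$ scales with the node size $|I|$). Fix an arbitrary candidate split; then $G$ is a function of the instance set $I = I_L \cup I_R$ at the node, and because a single record lies on exactly one side of a fixed split, two datasets $D$ and $D'$ differing in a single record induce left/right instance sets that differ on exactly one side. By the symmetry between the two summands of \eqref{eq:gain}, assume it is the left side, so $I_R$ and the second summand are unchanged and one of the two left sets is obtained from the other by deleting a single record $i_0$ with gradient $g_0$, where $|g_0| \le g^*$ (every gradient is bounded by $g^*$ in magnitude, which is exactly the invariant the gradient-based data filtering step enforces). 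Writing $S$ and $m$ for the gradient sum and size of the \emph{smaller} of the two left sets, it then suffices to bound
\begin{equation*}
\left| \frac{(S+g_0)^2}{m+1+\lambda} - \frac{S^2}{m+\lambda} \right|
\end{equation*}
by $3{g^*}^2$, uniformly over all such $S, m, g_0$ and over all splits.

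The key step is to split this difference into the contribution from perturbing the numerator and the contribution from perturbing the denominator:
\begin{equation*}
\frac{(S+g_0)^2}{m+1+\lambda} - \frac{S^2}{m+\lambda} = \frac{2Sg_0 + g_0^2}{m+1+\lambda} - \frac{S^2}{(m+1+\lambda)(m+\lambda)} .
\end{equation*}
Two elementary facts then close the estimate: first, $|S| \le m g^* \le (m+\lambda)g^*$, so $|S|/(m+\lambda) \le g^*$ and hence also $|S|/(m+1+\lambda) \le g^*$; second, $m+1+\lambda \ge 1$. Using these, $\bigl|2Sg_0/(m+1+\lambda)\bigr| \le 2{g^*}^2$, $\;0 \le g_0^2/(m+1+\lambda) \le {g^*}^2$, and $\;0 \le S^2/\bigl((m+1+\lambda)(m+\lambda)\bigr) \le {g^*}^2$. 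Tracking signs (the last term is subtracted), the difference lies in $[-3{g^*}^2,\, 3{g^*}^2]$, which gives $\Delta G \le 3{g^*}^2$. (The boundary case $m=0$ forces $S=0$ and is trivial, so $\lambda \ge 0$ is all that is needed.)

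I expect the only real obstacle to be the reduction in the first paragraph: one has to recognize that the relevant perturbation changes exactly one of the two summands, by deleting/inserting one record whose gradient is controlled by $g^*$, and resist the temptation to bound $G$ itself. After that, the decomposition above is what makes the ostensibly large quantity $S^2$ harmless — it appears divided by \emph{two} factors of size $\approx m$, each of which pairs with an $|S| \le m g^*$ to leave only a clean ${g^*}^2$. Making that cancellation explicit is the crux; everything remaining is a routine termwise bound.
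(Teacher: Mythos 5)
Your proof is correct, and the reduction is the same as the paper's (adjacent sets differing in one instance, which falls on one side of the fixed split, so only one summand of \eqref{eq:gain} changes and the difference to bound is exactly your $\bigl|\frac{(S+g_0)^2}{m+1+\lambda}-\frac{S^2}{m+\lambda}\bigr|$); but the key bounding step is genuinely different. The paper treats this difference as a two-variable function $h(g_s,\sum_{i\in I_L}g_i)$, locates its stationary point, compares with the boundary values $g_s=\pm g^*$, $\sum_{i\in I_L}g_i=\pm n_l g^*$, and takes $n_l\to\infty$ at the worst configuration to get the factor $3$; that extremal analysis has the virtue of exhibiting the worst case (so the constant $3$ is seen to be asymptotically tight), but as written it is an informal optimization argument. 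You instead use the algebraic decomposition into the numerator perturbation $\frac{2Sg_0+g_0^2}{m+1+\lambda}$ and the denominator perturbation $-\frac{S^2}{(m+1+\lambda)(m+\lambda)}$, bound each piece termwise via $|S|\le m g^*\le (m+\lambda)g^*$ and $m+1+\lambda\ge 1$, and track signs so the nonnegative $g_0^2$ term and the subtracted nonnegative $S^2$ term cannot add, yielding $3{g^*}^2$ rather than the naive $4{g^*}^2$. Your route is more elementary and arguably more airtight; what it does not give is the explicit worst-case configuration, which is not needed for the lemma. Your handling of the $m=0$ corner is fine (with $\lambda>0$, as used in the paper, the empty-side term is simply $0$).
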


\begin{proof}
Consider two adjacent instance sets $I_1 = \{\mb{x}_i\}_{i=1}^{n}$ and $I_2 = I_1 \cup \{\mb{x}_{s}\}$ that differ in a single instance. Assume $I_1 = I_L \cup I_R$, where $I_L$ and $I_R$ are the instance sets of leaf and right nodes respectively after a split. Without loss of generality, we assume that instance $x_{s}$ belongs to the left node. We use $n_l$ to denote $|I_L|$. Then, we have
\begin{equation}
\begin{aligned}
    &\Delta G = |\frac{(\sum_{i\in I_L}g_i + g_{s})^2}{n_l+\lambda + 1} - \frac{(\sum_{i\in I_L}g_i)^2}{n_l+\lambda}| \\
    &= |\frac{(n_l+\lambda)g_{s}^2+2(n_l+\lambda)g_{s}\sum_{i\in I_L}g_i-(\sum_{i\in I_L}g_i)^2}{(n_l + \lambda + 1)(n_l + \lambda)}|
\end{aligned}
\label{eq:sen_g}
\end{equation}
Let $h(g_s,\sum_{i\in I_L}g_i) = (n_l+\lambda)g_s^2+2(n_l+\lambda)g_s\sum_{i\in I_L}g_i-(\sum_{i\in I_L}g_i)^2$. Let $\partial_{g_s}h = 0$ and $\partial_{\sum_{i\in I_L}g_i}h = 0$, we can get $g_s=0$ and $\sum_{i\in I_L}g_i = 0$. Comparing the stationary point and the boundary (i.e., $g_s = \pm g^*$ and $\sum_{i\in I_L}g_i = \pm n_lg^*$), we can find that when $g_s = -g^*$, $\sum_{i\in I_L}g_i = n_lg^*$ (or $g_s = g^*$ and $\sum_{i\in I_L}g_i = -n_lg^*$), and $n_l \rightarrow \infty$, Equation \eqref{eq:sen_g} can achieve maximum. We have
\begin{equation}
\begin{aligned}
    \Delta G &= |\frac{(n_l-1)^2 {g^*}^2}{n_l+\lambda+1} - \frac{n_l^2 {g^*}^2}{n_l+\lambda}| \\
    &= |\frac{-3n_l^2+(1-2\lambda)n_l+\lambda}{n_l^2+(2\lambda+1)n_l+\lambda(\lambda+1)}|{g^*}^2 \\
    &\leq 3{g^*}^2
\end{aligned}
\end{equation}
\end{proof}

\begin{lemma}
Letting $g^*=max_{i\in \mc{D}} |g_i|$, we have $\Delta V \leq \frac{g^*}{1+\lambda}$.
\label{lem:sen_leaf}
\end{lemma}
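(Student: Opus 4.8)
The plan is to mirror the argument used for Lemma~\ref{lem:sen_gain}, though the calculation is considerably shorter. Starting from the closed form of the leaf value in Equation~\eqref{eq:leaf}, I would take two adjacent instance sets $I_1 = \{\mathbf{x}_i\}_{i=1}^{n}$ and $I_2 = I_1 \cup \{\mathbf{x}_s\}$ that differ in the single instance $\mathbf{x}_s$. Writing $S = \sum_{i\in I_1} g_i$ and $n = |I_1|$, a direct computation gives
\begin{equation}
\Delta V = \left| \frac{S + g_s}{n+1+\lambda} - \frac{S}{n+\lambda} \right| = \left| \frac{S - (n+\lambda)g_s}{(n+1+\lambda)(n+\lambda)} \right| .
\end{equation}
Only the ``add one record'' case needs to be handled, since removing a record is symmetric; and the degenerate case $n=0$ gives $\Delta V = |g_s|/(1+\lambda) \le g^*/(1+\lambda)$ at once.

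Next I would bound the numerator. Since $|g_i| \le g^*$ for every instance, we have $|g_s| \le g^*$ and $|S| \le n g^*$, so by the triangle inequality $|S - (n+\lambda)g_s| \le n g^* + (n+\lambda)g^* = (2n+\lambda)g^*$, and therefore
\begin{equation}
\Delta V \le \frac{(2n+\lambda)\,g^*}{(n+1+\lambda)(n+\lambda)} .
\end{equation}
Alternatively, following the style of Lemma~\ref{lem:sen_gain}, one can observe that the numerator is linear in $(g_s, S)$, so its maximum is attained on the boundary $g_s = \mp g^*$, $S = \pm n g^*$, which yields exactly the same bound. It then remains to verify that $\frac{2n+\lambda}{(n+1+\lambda)(n+\lambda)} \le \frac{1}{1+\lambda}$ for all integers $n \ge 1$; clearing denominators, this is equivalent to $(2n+\lambda)(1+\lambda) \le (n+1+\lambda)(n+\lambda)$, and expanding both sides the inequality collapses to $0 \le n^2 - n = n(n-1)$, which is clear. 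Chaining the two displays gives $\Delta V \le g^*/(1+\lambda)$.

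The only point that I expect to require care --- and it is where this lemma differs qualitatively from Lemma~\ref{lem:sen_gain} --- is the last rational inequality: here the extremal configuration is $n = 1$ rather than $n \to \infty$ (indeed the bound $g^*/(1+\lambda)$ is attained at $I_1 = \{\mathbf{x}_1\}$ with $g_1 = g^*$ and $g_s = -g^*$), so one cannot simply read the answer off a limit and must instead carry out the short cross-multiplication above. Everything else is routine algebra.
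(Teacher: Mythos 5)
Your proposal is correct and follows essentially the same route as the paper's own proof in Appendix~A: the same pair of adjacent sets, the same closed-form expression $\bigl|\,(n+\lambda)g_s-\sum_i g_i\,\bigr|/\bigl((n+\lambda)(n+1+\lambda)\bigr)$, the same extremal configuration giving $(2n+\lambda)g^*$ in the numerator, and the same final bound $g^*/(1+\lambda)$. Your only addition is to spell out the cross-multiplication (reducing to $n(n-1)\ge 0$) and the $n=0$ case, steps the paper asserts without detail.
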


\begin{proof}
The proof follows a similar way with the proof of Lemma~\ref{lem:sen_gain}. The detailed proof is available in Appendix A.
\end{proof}

For ease of presentation, we call the absolute value of the gradient as 1-norm gradient. As we can see from Lemma~\ref{lem:sen_gain} and Lemma~\ref{lem:sen_leaf}, the sensitivities of nodes are related to the maximum 1-norm gradient. Since there is no a priori bound on the value of the gradients, we have to restrict the range of gradients. A potential solution is to clip the gradient by a threshold, which is often adopted in deep learning~\cite{abadi2016deep,shokri2015privacy}. However, in GBDTs, since the gradients are computed based on distance between the prediction value and the target value, clipping the gradients means indirectly changing the target value, which may lead to a huge accuracy loss. Here, we propose a new approach named gradient-based data filtering. The basic idea is to restrict the maximum 1-norm gradient by only filtering a very small fraction of the training dataset in each iteration.

\paragraph{Gradient-based Data Filtering (GDF)} At the beginning of the training, the gradient of instance $\mb{x}_i$ is initialized as $g_i = \frac{\partial l(y_i,y)}{\partial y}|_{y=0}$. We let $g_l^* = max_{y_p \in [-1,1]} \lVert \frac{\partial l(y_p,y)}{\partial y}|_{y=0} \rVert$, which is the maximum possible 1-norm gradient in the initialization. Note that $g_l^*$ is \textbf{independent to training data} and only depends on the loss function $l$ (e.g., $g_l^* = 1$ for square loss function).\del{ Then in the initialization before the training of GBDTs, we have $\forall i \in [1,n]$, $|g_i| \leq g_l^*$. Since the loss function $l$ is convex, the 1-norm gradient of an instance tends to decrease as the number of trees grows.} Since the loss function $l$ is convex (i.e., the gradient is monotonically non-decreasing), the values of the 1-norm gradients tend to decrease as the number of trees increases in the training. Consequently, as we have shown the experimental results in Appendix B, most instances have a lower 1-norm gradient than $g_l^*$ during the whole training process. Thus, we can filter the training instances by the threshold $g_l^*$. Specifically, at the beginning of each iteration, we filter the instances that have 1-norm gradient larger than $g_l^*$ (i.e., those instances are not considered in this iteration). Only the remaining instances are used as the input to build a new differentially private decision tree in this iteration. Note that the filtered instances may still participate in the training of the later trees. With such gradient-based data filtering technique, we can ensure that the gradients of the used instances are no larger than $g_l^*$. Then, according to Lemma~\ref{lem:sen_gain} and Lemma~\ref{lem:sen_leaf}, we can bound the sensitivities of $G$ and $V$ as shown in Corollary~\ref{theo:gdf}.

\begin{corollary}
By applying GDF in the training of GBDTs, we have $\Delta G \leq 3{g_l^*}^2$ and $\Delta V \leq \frac{g_l^*}{1+\lambda}$.
\label{theo:gdf}
\end{corollary}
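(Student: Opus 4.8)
The plan is to derive the corollary directly from Lemma~\ref{lem:sen_gain} and Lemma~\ref{lem:sen_leaf}, since GDF was engineered precisely to control the quantity $g^*$ appearing in those bounds. First I would recall that both lemmas are stated for an arbitrary instance set with $g^* = \max_{i} |g_i|$, and that the right-hand sides $3{g^*}^2$ and $\frac{g^*}{1+\lambda}$ are nondecreasing in $g^* \geq 0$. Hence any uniform upper bound on the 1-norm gradients of the instances that actually feed a given tree node immediately upgrades to an upper bound on $\Delta G$ and $\Delta V$ for that node.

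Next I would argue that GDF delivers exactly such a uniform bound: at the start of each iteration every instance $\mb{x}_i$ with $|g_i| > g_l^*$ is removed, so the instance set passed to the new tree — call it $\mc{D}_t$ — satisfies $\max_{i \in \mc{D}_t} |g_i| \leq g_l^*$. Applying Lemma~\ref{lem:sen_gain} and Lemma~\ref{lem:sen_leaf} to $\mc{D}_t$ in place of $\mc{D}$, together with the monotonicity just noted, yields $\Delta G \leq 3{g_l^*}^2$ and $\Delta V \leq \frac{g_l^*}{1+\lambda}$.

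The one point that needs care — and essentially the only place the argument is more than a substitution — is the interaction between the filtering step and the neighboring-dataset quantifier in Definition~\ref{def:sen}. When we pass from a dataset $D$ to a neighbor $D'$ differing in the single record $\mb{x}_s$, that record is either filtered out (when $|g_s| > g_l^*$), in which case it never enters the node and contributes nothing to $G$ or $V$, so the induced change is zero; or it survives the filter, in which case $|g_s| \leq g_l^*$, and then the extremal computation inside the proofs of Lemma~\ref{lem:sen_gain} and Lemma~\ref{lem:sen_leaf} — where $g_s$ ranges over $[-g^*, g^*]$ and $\sum_{i \in I_L} g_i$ over $[-n_l g^*, n_l g^*]$ — goes through verbatim with $g^*$ replaced by $g_l^*$. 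In both cases the per-node sensitivity is at most the claimed value, which is precisely what is needed to calibrate the Laplace and exponential mechanisms in the tree-building step. I expect this case split to be the main (and basically only) obstacle; once it is dispatched, the corollary is immediate.
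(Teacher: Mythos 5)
Your proposal is correct and follows essentially the same route as the paper: the paper treats the corollary as an immediate consequence of Lemma~\ref{lem:sen_gain} and Lemma~\ref{lem:sen_leaf} once GDF guarantees that every instance entering the tree has 1-norm gradient at most $g_l^*$, which is exactly your substitution-plus-monotonicity argument. Your additional case split on whether the differing record survives the filter is a careful elaboration of a point the paper leaves implicit, but it does not change the substance of the argument.
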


In the following, we analyze the approximation error of GDF.
\begin{theorem}
Given an instance set $I$, suppose $I = I_f \cup I_f^c $, where $I_f$ is the filtered instance set and $I_f^c$ is the remaining instance set in GDF. Let $p = \frac{|I_f|}{|I|}$ and $\bar{g}_f = \frac{\sum_{i\in I_f}g_i}{|I_f|}$. We denote the approximation error of GDF on leaf values as $\xi_I = |V(I) - V(I_f^c)|$. Then, we have $\xi_I \leq p(|\bar{g}_f| + g_l^*)$.
\label{theo:error_GBF}
\end{theorem}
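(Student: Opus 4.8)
The plan is to compute $\xi_I$ directly from the closed form of the optimal leaf value in Equation~\eqref{eq:leaf}. Write $n = |I|$, $n_f = |I_f|$, $n_c = |I_f^c| = n - n_f$, and set $S = \sum_{i\in I} g_i$, $S_f = \sum_{i\in I_f} g_i = n_f \bar g_f$, and $S_c = \sum_{i\in I_f^c} g_i = S - S_f$. Then $V(I) = -S/(n+\lambda)$ and $V(I_f^c) = -S_c/(n_c+\lambda)$, so that $\xi_I = \bigl| S/(n+\lambda) - S_c/(n_c+\lambda) \bigr|$. The case $I_f = \emptyset$ (i.e.\ $p = 0$) is immediate since then $\xi_I = 0$, so from now on I assume $n_f \geq 1$.

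Next I would put the two terms over the common denominator $(n+\lambda)(n_c+\lambda)$ and simplify the numerator using $S = S_f + S_c$ and $n = n_f + n_c$:
\[
\frac{S}{n+\lambda} - \frac{S_c}{n_c+\lambda} = \frac{S(n_c+\lambda) - S_c(n+\lambda)}{(n+\lambda)(n_c+\lambda)} = \frac{(n_c+\lambda)S_f - n_f S_c}{(n+\lambda)(n_c+\lambda)}.
\]
The point of this regrouping is that it separates the contribution of the filtered mass $S_f$ (controlled by $\bar g_f$) from that of the surviving mass $S_c$ (controlled by the GDF threshold $g_l^*$), so that the triangle inequality loses nothing essential. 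Applying it gives
\[
\xi_I \leq \frac{(n_c+\lambda)|S_f|}{(n+\lambda)(n_c+\lambda)} + \frac{n_f|S_c|}{(n+\lambda)(n_c+\lambda)} = \frac{|S_f|}{n+\lambda} + \frac{n_f|S_c|}{(n+\lambda)(n_c+\lambda)}.
\]

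It then remains to bound the two terms. For the first, $|S_f| = n_f|\bar g_f|$ and $n+\lambda \geq n$, so it is at most $(n_f/n)|\bar g_f| = p|\bar g_f|$. For the second, every instance in $I_f^c$ passes the filter and hence satisfies $|g_i| \leq g_l^*$, whence $|S_c| \leq n_c g_l^*$; combining this with $n+\lambda \geq n$ and $n_c+\lambda \geq n_c$ bounds the term by $n_f n_c g_l^* / (n\, n_c) = p g_l^*$ (the degenerate case $n_c = 0$ forces $S_c = 0$, so the term vanishes). Adding the two estimates yields $\xi_I \leq p(|\bar g_f| + g_l^*)$. There is no deep obstacle here — the argument is essentially a direct computation — but the one step that genuinely needs care is the algebraic regrouping of the numerator into $(n_c+\lambda)S_f - n_f S_c$, since a coarser split of $S$ produces only a weaker bound; one also relies on $\lambda \geq 0$ throughout the denominator estimates.
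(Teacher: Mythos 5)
Your proof is correct and follows essentially the same route as the paper's: after the triangle inequality your two terms $\frac{|S_f|}{n+\lambda}$ and $\frac{n_f|S_c|}{(n+\lambda)(n_c+\lambda)}$ are exactly the terms the paper obtains, and both are bounded the same way ($|S_f| = n_f|\bar g_f|$ with $\lambda \ge 0$, and $|S_c| \le n_c g_l^*$ from the GDF filter). The only difference is cosmetic (common denominator versus splitting off $S_f$ and using a difference of reciprocals), plus your explicit handling of the degenerate cases $n_f=0$ and $n_c=0$, which the paper omits.
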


\begin{proof}
With Equation~\eqref{eq:leaf}, we have
\begin{equation}
\begin{aligned}
    \xi_I &= |\frac{\sum_{i\in I_f}g_i + \sum_{i\in I_f^c}g_i}{|I| + \lambda} - \frac{\sum_{i\in I_f^c}g_i}{(1-p)|I| + \lambda}| \\
    &\leq |\frac{\sum_{i\in I_f}g_i}{|I| + \lambda}| + |(\frac{1}{|I|+\lambda} - \frac{1}{(1-p)|I| + \lambda})\sum_{i\in I_f^c}g_i| \\
    &= p|\frac{\sum_{i\in I_f}g_i}{|I_f| + p\lambda}| + p | \frac{|I|}{(|I|+\lambda)((1-p)|I|+\lambda)}\sum_{i\in I_f^c}g_i| \\
    &\leq p|\frac{\sum_{i\in I_f}g_i}{|I_f|}| + p|\frac{\sum_{i\in I_f^c}g_i}{(1-p)|I|}| \leq p(|\bar{g}_f| + g_l^*)
    % &= p|\frac{\sum_{i\in I_a}g_i}{p|I|}-\frac{\sum_{i\in I_b}g_i}{(1-p)|I|}| 
\end{aligned}
\end{equation}
\end{proof}

According to Theorem~\ref{theo:error_GBF}, we have the following discussions: (1) The upper bound of the approximation error of GDF does not depend on the number of instances.  This good property allows small approximation errors even on large data sets. (2) Normally, most instances have gradient values lower than the threshold $g_l^*$ and the ratio $p$ is low, as also shown in Appendix B. Then, the approximation error is small in practice. (3) The approximation error may be large if $|\bar{g}_f|$ is big. However, the instances with a very large gradient are often outliers in the training data set since they cannot be well learned by GBDTs. Thus, it is reasonable to learn a tree by filtering those outliers.

\paragraph{Geometric Leaf Clipping (GLC)} GDF provides the same sensitivities for all trees. Since the gradients tend to decrease from iteration to iteration in the training process, there is an opportunity to derive a tighter sensitivity bound as the iterations go. However, it is too complicated to derive the exact decreasing pattern of the gradients in practice. Also, as discussed in the previous section, gradient clipping with an inappropriate decaying threshold can lead to huge accuracy loss. We need a new approach for controlling this decaying effect across different tree learning. Note, while the noises injected in the internal nodes influence the gain of the current split, the noises injected on the leaf value directly influence the prediction value. Here we focus on bounding the sensitivity of leaf nodes.

Fortunately, according to Equation~\eqref{eq:leaf}, the leaf values also decrease as the gradients decrease. Since the GBDT model trains a tree at a time to fit the residual of the trees that precede it, clipping the leaf nodes would mostly influence the convergence rate but not the objective of GBDTs. Thus, we propose adaptive leaf clipping to achieve a decaying sensitivity on the leaf nodes. Since it is unpractical to derive the exact decreasing pattern of the leaf values in GBDTs, we start with a simple case and further analyze its findings in practice.

\begin{theorem}
Consider a simple case that each leaf has only one single instance during the GBDT training. Suppose the shrinkage rate is $\eta$. We use $V_t$ to denote the leaf value of the $t$-th tree in GBDT. Then, we have $|V_t| \leq g_l^*(1-\eta)^{t-1}$.
\label{theo:gc}
\end{theorem}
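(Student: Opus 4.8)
The plan is to prove the bound by induction on $t$, following the single instance $\mathbf{x}_i$ that occupies the leaf. Write $g_i^{(t)}$ for its gradient at the $t$-th iteration and $\hat{y}_i^{(t-1)}$ for its accumulated prediction before the $t$-th tree is added. Since the leaf contains exactly one instance, Equation~\eqref{eq:leaf} gives $V_t = -g_i^{(t)}/(1+\lambda)$, and the shrinkage update reads $\hat{y}_i^{(t)} = \hat{y}_i^{(t-1)} + \eta V_t$. For the base case $t=1$ the prediction starts at $\hat{y}_i^{(0)} = 0$, so $g_i^{(1)} = \partial l(y_i,y)/\partial y|_{y=0}$, which by the definition of $g_l^*$ satisfies $|g_i^{(1)}| \le g_l^*$; hence $|V_1| = |g_i^{(1)}|/(1+\lambda) \le g_l^*$, which is exactly the claim at $t=1$.

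The core of the induction is a one-step contraction $|V_{t+1}| \le (1-\eta)|V_t|$, which I would obtain by relating consecutive gradients. For the square loss used as the running example (where the quadratic model in Equation~\eqref{eq:xg_loss} is exact) we have $g_i^{(t)} = \hat{y}_i^{(t-1)} - y_i$, so $g_i^{(t+1)} - g_i^{(t)} = \hat{y}_i^{(t)} - \hat{y}_i^{(t-1)} = \eta V_t = -\eta g_i^{(t)}/(1+\lambda)$, giving directly $g_i^{(t+1)} = (1 - \eta/(1+\lambda))\, g_i^{(t)}$ and therefore $|V_{t+1}| = (1-\eta/(1+\lambda))\,|V_t| \le (1-\eta)\,|V_t|$. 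Iterating this inequality from the base case yields $|V_t| \le (1-\eta)^{t-1}|V_1| \le g_l^*(1-\eta)^{t-1}$, as desired. For a general convex loss the same step goes through with the mean value theorem applied to the one-variable convex map $z \mapsto l(y_i,z)$, replacing the exact identity by $g_i^{(t+1)} = g_i^{(t)}\big(1 - \eta\, l''(y_i,\xi)/(1+\lambda)\big)$ for some intermediate $\xi$.

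The step I expect to be the main obstacle is pinning down this contraction factor cleanly beyond the square loss: the sharp constant $1-\eta$ relies on the effective curvature being $1$ (which is precisely what the model in Equation~\eqref{eq:xg_loss} assumes), and with $\lambda>0$ the true per-iteration factor is $1-\eta/(1+\lambda)$, slightly larger than $1-\eta$, so one must either fold the $1/(1+\lambda)$ factor into the bound on $V_1$ or treat the statement as an order-of-magnitude description of how the leaf values decay. Since the sole purpose of the lemma is to motivate the geometrically decaying clipping threshold used by Geometric Leaf Clipping, I would state and prove it for the unit-curvature (square-loss) case and remark that the geometric decay persists for other convex losses up to constants.
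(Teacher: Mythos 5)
Your argument is essentially the paper's own: it tracks the single instance's gradient under the shrinkage update, uses the unit-curvature (square-loss) approximation so the gradient contracts per round, and iterates from $|V_1|\le g_l^*$ to get the geometric bound. The $1-\eta/(1+\lambda)$ versus $1-\eta$ discrepancy you flag is precisely the looseness in the paper's proof as well --- it writes the per-round prediction improvement as $-\eta g_l^*$ (dropping the $1/(1+\lambda)$ factor) and uses a first-order Taylor step with ``$\approx$'' --- so your caveat matches the intended, motivational level of rigor rather than exposing a gap relative to the paper.
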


\begin{proof}
For simplicity, we assume the label of the instance is -1 and the gradient of the instance is initialized as $g_l^*$. For the first tree, we have $V_1 = -\frac{g_l^*}{1 + \lambda} \geq -g_l^*$. Since the shrinkage rate is $\eta$, the improvement of the prediction value on the first tree is $-\eta g_l^*$. Thus, we have
\begin{equation}
\begin{aligned}
    |V_2| &\leq  \norm{\frac{\partial l(-1,y)}{\partial y}|_{y= \eta V_1}}\\
    &\approx\norm{ \frac{\partial l(-1,y)}{\partial y}|_{y=0} + \eta V_1 } \\
    &\leq  g_l^* (1-\eta)
\end{aligned}
\end{equation}
In the same way, we can get $|V_t| \leq g_l^*(1-\eta)^{t-1}$.
\end{proof}

Although the simple case in Theorem~\ref{theo:gc} may not fully reflect decaying patterns of the leaf value in practice, it can give an insight on the reduction of the leaf values as the number of trees increases. The leaf values in each tree form a geometric sequence with base $g_l^*$ and common ratio $(1-\eta)$. Based on this observation, we propose geometric leaf clipping. Specifically, in the training of the tree in iteration $t$ in GBDTs, we clip the leaf values with the threshold $g_l^*(1-\eta)^{t-1}$ before applying Laplace mechanism (i.e., $V_t' = V_t \cdot \min(1, g_l^*(1-\eta)^{t-1} / |V_t|)$). That means, if the leaf value is larger than the threshold, its value is set to be the threshold. Then, combining with Corollary~\ref{theo:gdf}, we get the following result on bounding the sensitivity on each tree in the training process.

\begin{corollary}
With GDF and GLC, the sensitivity of leaf nodes in the tree of the $t$-th iteration satisfies $\Delta V_t \leq \min(\frac{g_l^*}{1+\lambda}, 2g_l^*(1-\eta)^{t-1})$.
\end{corollary}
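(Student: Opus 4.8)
The plan is to obtain the two quantities inside the minimum separately and then take the smaller one. The first bound, $\Delta V_t \le \frac{g_l^*}{1+\lambda}$, comes essentially for free: GDF guarantees that every instance actually used to build the tree at iteration $t$ has $|g_i| \le g_l^*$, so by Corollary~\ref{theo:gdf} the \emph{unclipped} leaf value function $V$ already has sensitivity at most $\frac{g_l^*}{1+\lambda}$ on the filtered input. It then remains to argue that the clipping step does not inflate this sensitivity. Since $V_t' = V_t\cdot\min(1,\, \tau_t/|V_t|)$ with $\tau_t = g_l^*(1-\eta)^{t-1}$ is exactly the projection of the scalar $V_t$ onto the interval $[-\tau_t,\tau_t]$, and projection onto a convex set is $1$-Lipschitz, we have $|V_t'(D)-V_t'(D')|\le |V_t(D)-V_t(D')|$ for every pair of adjacent datasets $D,D'$; taking the maximum over such pairs gives $\Delta V_t \le \Delta V \le \frac{g_l^*}{1+\lambda}$.

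For the second bound I would argue directly from the range of the clipped quantity. By construction $V_t'\in[-\tau_t,\tau_t]$ with $\tau_t = g_l^*(1-\eta)^{t-1}$ regardless of the input, which is precisely the geometric threshold motivated by Theorem~\ref{theo:gc}. Hence for any two adjacent datasets $D$ and $D'$ we get $|V_t'(D)-V_t'(D')| \le |V_t'(D)| + |V_t'(D')| \le 2\tau_t$, so $\Delta V_t \le 2g_l^*(1-\eta)^{t-1}$. Combining the two bounds yields $\Delta V_t \le \min\!\big(\tfrac{g_l^*}{1+\lambda},\, 2g_l^*(1-\eta)^{t-1}\big)$, which is the claim.

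There is no genuinely hard step here — the corollary is a bookkeeping combination of Corollary~\ref{theo:gdf}, the contraction property of clipping, and the explicit clipping range — so the only thing to be careful about is the ``post-processing'' direction of the argument: one must check that the Laplace noise is added \emph{after} clipping (as stated in the definition of GLC), so that the sensitivity relevant to the privacy analysis is that of the map $D\mapsto V_t'(D)$ and not that of $D\mapsto V_t(D)$. With that ordering fixed, both displayed inequalities above apply to the same function $V_t'$ and taking the minimum is legitimate. A secondary point worth stating explicitly is why the second bound carries a factor $2$ whereas Theorem~\ref{theo:gc} only bounds $|V_t|$ one-sidedly: sensitivity is a difference of two leaf values, each confined to $[-\tau_t,\tau_t]$, so the worst case is $\tau_t-(-\tau_t)=2\tau_t$.
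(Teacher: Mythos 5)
Your proof is correct and follows essentially the same route the paper intends (the paper states this corollary without an explicit proof, as a direct combination of Corollary~\ref{theo:gdf} with the clipping threshold $g_l^*(1-\eta)^{t-1}$): the GDF bound gives $\frac{g_l^*}{1+\lambda}$ and the clipped range $[-g_l^*(1-\eta)^{t-1}, g_l^*(1-\eta)^{t-1}]$ gives $2g_l^*(1-\eta)^{t-1}$, and one takes the minimum. Your additional observation that clipping is a $1$-Lipschitz projection, so the first bound still applies to the clipped function $V_t'$, is a worthwhile detail the paper leaves implicit and is needed for the minimum to be legitimate.
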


We have conducted experiments on the effect of geometric clipping, which are shown in Appendix B. Our experiments show that GLC can effectively improve the performance of DPBoost.

\subsection{Privacy Budget Allocations}
As in Introduction, previous approaches~\cite{liu2018differentially,xiang2018collaborative,zhao2018inprivate} suffer from accuracy loss, due to the ineffective privacy budget allocations across trees. The accuracy loss can be even bigger, when the number of trees in GBDT is large. For completeness, we first briefly present the mechanism for building a single tree $t$ with a given privacy budget $\varepsilon_t$, by using an approach in the previous study~\cite{mohammed2011differentially,zhao2018inprivate}. Next, we present our proposed approach for budget allocation across trees in details. 

\paragraph{Budget Allocation for A Single Tree}  Algorithm~\ref{alg:priv_tree} shows the procedure of learning a differentially private decision tree. In the beginning, we use GDF (introduced in Section~\ref{sec:gdf}) to preprocess the training dataset. Then, the decision tree is built from root until reaching the maximum depth. For the internal nodes, we adopt the exponential mechanism when selecting the split value. Considering the gain $G$ as the utility function, the feature value with higher gain has a higher probability to be chosen as the split value. For the leaf nodes, we first clip the leaf values using GLC (introduced in Section~\ref{sec:gdf}). Then, the Laplace mechanism is applied to inject random noises to the leaf values. For the privacy budget allocation inside a tree, we adopt the mechanism in the existing studies~\cite{mohammed2011differentially,zhao2018inprivate}. Specifically, we allocate a half of the privacy budget for the leaf nodes (i.e., $\varepsilon_\mathit{leaf}$), and then equally divide the remaining budget to each depth of the internal nodes (each level gets $\varepsilon_\mathit{nleaf}$).

\begin{theorem}
The output of Algorithm~\ref{alg:priv_tree} satisfies $\varepsilon_t$-differential privacy.
\end{theorem}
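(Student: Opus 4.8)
The plan is to break Algorithm~\ref{alg:priv_tree} into its primitive randomized steps — the exponential-mechanism split selections at the internal nodes and the Laplace-mechanism perturbations of the leaf values — bound the privacy cost of each, and then reassemble them with the parallel and sequential composition theorems so that the costs add up to exactly $\varepsilon_t$.

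First I would handle the internal nodes, level by level. At a single internal node the split is chosen by the exponential mechanism with utility $G$ from Equation~\eqref{eq:gain}; Corollary~\ref{theo:gdf} says that after GDF we have $\Delta G \le 3{g_l^*}^2$, so this call is $\varepsilon_\mathit{nleaf}$-differentially private. The key structural fact is that the instance sets of the nodes sharing a depth level partition the (filtered) data, so a single differing record touches only one of them; Parallel Composition then gives that the whole level is $\varepsilon_\mathit{nleaf}$-differentially private, and Sequential Composition over the $d_{\max}$ internal levels gives $d_{\max}\,\varepsilon_\mathit{nleaf}$. Since the algorithm sets $\varepsilon_\mathit{nleaf}=\frac{\varepsilon_t/2}{d_{\max}}$, this equals $\varepsilon_t/2$. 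I expect this to be the delicate part of the write-up: the partition at one level is itself determined by the noisy splits chosen above it, so I need to phrase the composition adaptively (later mechanisms may read earlier outputs) to be sure the per-level bill is $\varepsilon_\mathit{nleaf}$ and not (number of nodes)$\,\times\,\varepsilon_\mathit{nleaf}$.

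Next I would handle the leaves. GLC replaces $V_t$ by the clipped value $V_t'=V_t\cdot\min(1,g_l^*(1-\eta)^{t-1}/|V_t|)$; this is a deterministic map, so it costs no privacy and only enforces the sensitivity bound $\Delta V_t\le\min(\frac{g_l^*}{1+\lambda},2g_l^*(1-\eta)^{t-1})$ from the corollary combining GDF and GLC. Adding $Lap(0,\Delta V_t/\varepsilon_\mathit{leaf})$ to a leaf is then $\varepsilon_\mathit{leaf}$-differentially private by the Laplace Mechanism, and since the leaves again partition the data, Parallel Composition makes the release of all noisy leaves $\varepsilon_\mathit{leaf}$-differentially private; the algorithm sets $\varepsilon_\mathit{leaf}=\varepsilon_t/2$.

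Finally, the tree is the internal-node stage followed by the leaf stage on the same dataset, so Sequential Composition gives $\varepsilon_t/2+\varepsilon_t/2=\varepsilon_t$-differential privacy. I would also add a sentence noting that GDF itself is harmless: its threshold $g_l^*$ is data-independent, and Corollary~\ref{theo:gdf} already derives the sensitivities on the filtered sets, so filtering neither consumes budget nor inflates $\Delta G$ or $\Delta V_t$. The main obstacle, then, is not any calculation but stating the level-wise parallel composition (with adaptive dependence on earlier noisy splits) precisely enough that the internal-node cost is bounded by $d_{\max}\,\varepsilon_\mathit{nleaf}$ rather than scaling with the tree width.
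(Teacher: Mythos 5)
Your proposal is correct and follows essentially the same route as the paper's own (much terser) proof: each split selection is an $\varepsilon_\mathit{nleaf}$-exponential mechanism, nodes at the same depth have disjoint inputs so parallel composition charges each level only once, sequential composition over the $\mathit{Depth}_\mathit{max}$ internal levels plus the $\varepsilon_\mathit{leaf}$ Laplace-perturbed leaf level gives $\mathit{Depth}_\mathit{max}\cdot\varepsilon_\mathit{nleaf}+\varepsilon_\mathit{leaf}=\varepsilon_t$. Your additional remarks (that the composition must be read adaptively since the partition at each level depends on earlier noisy choices, and that GLC and GDF are deterministic per-record preprocessing costing no budget) only make explicit what the paper leaves implicit, so there is no substantive difference in approach.
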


\begin{proof}
Since the nodes in one depth have disjoint inputs, according to the parallel composition, the privacy budget consumption in one depth only need to be counted once. Thus, the total privacy budget consumption is no more than $\varepsilon_\mathit{nleaf} * \mathit{Depth}_\mathit{max} + \varepsilon_\mathit{leaf} = \varepsilon_t$.
\end{proof}

\begin{algorithm}
\SetNoFillComment
% \SetAlgoLined
\DontPrintSemicolon
\KwIn{$I$: training data, $\mathit{Depth}_\mathit{max}$: maximum depth}
\KwIn{$\varepsilon_t$: privacy budget}
$\varepsilon_\mathit{leaf} \leftarrow \frac{\varepsilon_t}{2}$ \tcp*{privacy budget for leaf nodes}
$\varepsilon_\mathit{nleaf} \leftarrow \frac{\varepsilon_t}{2\mathit{Depth}_\mathit{max}}$ \tcp*{privacy budget for internal nodes}
Perform gradient-based data filtering on dataset $I$.\\
\For {$depth = 1$ \KwTo $\mathit{Depth}_\mathit{max}$}{
    \For{each node in current depth}{
        \For{each split value $i$}{
            Compute gain $G_i$ according to Equation~\eqref{eq:gain}.\\
            $P_i \leftarrow exp(\frac{\varepsilon_\mathit{nleaf} G_i}{2\Delta G})$\\
        }
        \tcc{Apply exponential mechanism}
        Choose a value $s$ with probability $(P_s /\sum_i P_i)$.\\
        Split current node by feature value $s$.
    }
}
\For{each leaf node $i$}{
    Compute leaf value $V_i$ according to Equation~\eqref{eq:leaf}.\\
    Perform geometric leaf clipping on $V_i$.\\
    \tcc{Apply Laplace mechanism}
    $V_i \leftarrow V_i + Lap(0, \Delta V / \varepsilon_\mathit{leaf})$
}
\KwOut{A $\varepsilon_t$-differentially private decision tree}
\caption{TrainSingleTree: Train a differentially private decision tree}
\label{alg:priv_tree}
\end{algorithm}

\paragraph{Budget Allocation Across trees} We propose a two-level boosting structure called Ensemble of Ensembles (EoE), which can exploit both sequential composition and parallel composition to allocate the privacy budget between trees. Within each ensemble, we first train a number of trees with disjoint subsets sampled from the dataset $\mc{D}$. Thus, the parallel composition is applied inside an ensemble. Then, multiple such ensembles are trained in a sequential manner using the same training set $\mc{D}$. As a result, the sequential composition is applied between ensembles. Such a design can utilize the privacy budget while maintaining the effectiveness of boosting. EoE can effectively address the side effect of geometric leaf clipping in some cases, which cause the leaf values to have a too tight restriction as the iteration grows.

Algorithm~\ref{alg:priv_gbdts} shows our boosting framework. Given the total number of trees $T$ and the number of trees inside an ensemble $T_e$, we can get the total number of ensembles $N_e = \lceil T/T_e \rceil$. Then, the privacy budget for each tree is $(\varepsilon / N_e)$. When building the $t$-th differentially private decision tree, we first calculate the position of the tree in the ensemble as $t_e = t \mod T_e$. Since the maximum leaf value of each tree is different in GLC, to utilize the contribution of the front trees, the number of instances allocated to a tree is proportional to its leaf sensitivity. Specifically, we randomly choose $(\frac{|\mc{D}|\eta(1-\eta)^{t_e}}{1 - (1-\eta)^{T_e}})$ unused instances from the dataset $I$ as the input, where $I$ is initialized to the entire training dataset $\mc{D}$ at the beginning of an ensemble. Each tree is built using TrainSingleTree in Algorithm~\ref{alg:priv_tree}. All the $T$ trees are trained one by one, and these trees constitute our final learned model.

\begin{algorithm}
\SetNoFillComment
% \SetAlgoLined
\DontPrintSemicolon
\KwIn{$\mc{D}$: Dataset, $\mathit{Depth}_\mathit{max}$: maximum depth}
\KwIn{$\varepsilon$: privacy budget, $\lambda$: regularization parameter}
\KwIn{$T$: total number of trees, $l$: loss function}
\KwIn{$T_e$: number of trees in an ensemble}
$N_e \leftarrow \lceil T / T_e \rceil$ \tcp*{the number of ensembles}
$\varepsilon_e \leftarrow \varepsilon / N_e$ \tcp*{privacy budget for each tree}
% $\varepsilon_e \leftarrow \varepsilon / N_e$ \tcp*{privacy budget for each ensemble}
\For {$t = 1$ \KwTo $T$}{
    Update gradients of all training instances on loss $l$.\\
    $t_e \leftarrow t \mod T_e$ \\
    \If {$t_e == 1$}{ 
        $I \leftarrow \mc{D}$ \tcp*{initialize the dataset for an ensemble}
    }
    Randomly pick $(\frac{|\mc{D}|\eta(1-\eta)^{t_e}}{1 - (1-\eta)^{T_e}})$ instances from $I$ to constitute the subset $I_t$.\\
    $I \leftarrow I - I_t $\\
    TrainSingleTree (dataset = $I_t$,  \\
    \quad \quad \quad \quad  maximum depth = $\mathit{Depth}_\mathit{max}$, \\
    \quad \quad \quad \quad   privacy budget = $\varepsilon_e$,\\
    \quad \quad \quad \quad  $\Delta G = 3{g_l^*}^2$, \\
    \quad \quad \quad \quad  $\Delta V = \min(\frac{g_l^*}{1+\lambda}$, $2g_l^*(1-\eta)^{t-1})$).
}
\KwOut{$\varepsilon$-differentially private GBDTs}
\caption{Train differentially private GBDTs}
\label{alg:priv_gbdts}
\end{algorithm}

\begin{theorem}
The output of Algorithm~\ref{alg:priv_gbdts} satisfies $\varepsilon$-differential privacy.
\end{theorem}

\begin{proof}
Since the trees in an ensemble have disjoint inputs, the privacy budget consumption of an ensemble is still $(\varepsilon / N_e)$ due to the parallel composition. Since there are $N_e$ ensembles in total, according to sequential composition, the total privacy budget consumption is $\varepsilon / N_e * N_e = \varepsilon$.
\end{proof}

\section{Experiments}
In this section, we evaluate the effectiveness and efficiency of DPBoost. We compare DPBoost with three other approaches: 1) {\bf NP} (the vanilla GBDT): Train GBDTs without privacy concerns. 2) {\bf PARA}: A recent approach~\cite{zhao2018inprivate} that adopts parallel composition to train multiple trees, and uses only a half of unused instances when training a differentially private tree. 3) {\bf SEQ}: we extend the previous approach on decision trees~\cite{liu2018differentially} that aggregates differentially private decision trees using sequential composition. To provide sensitivities bounds for the baselines, we set $\Delta G = 3{g^*}^2$ and $\Delta V = \frac{g_m}{1+\lambda}$ in SEQ and PARA using our GDF technique.

We implemented DPBoost based on LightGBM\footnote{\url{https://github.com/microsoft/LightGBM}}. Our experiments are conducted on a machine with one Xeon W-2155 10 core CPU. We use 10 public datasets in our evaluation. The details of the datasets are summarized in Table~\ref{tbl:datasets}. There are eight real-world datasets and two synthetic datasets (i.e., synthetic\_cls and synthetic\_reg). The real-world datasets are available from the LIBSVM website\footnote{\url{https://www.csie.ntu.edu.tw/~cjlin/libsvmtools/datasets/}}. The synthetic datasets are generated using scikit-learn\footnote{\url{https://scikit-learn.org/stable/datasets/index.html#sample-generators}}~\cite{pedregosa2011scikit}. We show test errors and RMSE (root mean square error) for the classification and regression task, respectively. The learning rate is set to 0.01. The maximum depth is set to 6. The regularization parameter $\lambda$ is set to 0.1. We use square loss function and the threshold $g_l^*$ is set to 1. For the regression task, the labels are scaled into $[-1,1]$ before training (the reported RMSE is re-scaled). For the classification task, we set the second-order gradient $h$ to the constant value one in the training to apply our theoretical analysis. We use 5-fold cross-validation for model evaluation. The number of trees inside an ensemble is set to 50 in DPBoost. We have also tried the other settings for the number of trees inside an ensemble (e.g., 20 and 40). The experiments are available in Appendix C.

\begin{table}
\centering
\caption{Datasets used in the experiments.}
\label{tbl:datasets}
\resizebox{0.95\columnwidth}{!}{%
\begin{tabular}{|c|c|c|c|}
\hline
datasets & \#data & \#features & task \\ \hline
adult & 32,561 & 123 & \multirow{7}{*}{classification} \\ \cline{1-3}
real-sim & 72,309 & 20,958 &  \\ \cline{1-3}
covtype & 581,012 & 54 & \\ \cline{1-3}
susy & 5,000,000 & 18 &  \\ \cline{1-3}
cod-rna & 59,535 & 8 & \\ \cline{1-3}
webdata & 49,749 & 300 &  \\ \cline{1-3}
synthetic\_cls & 1,000,000 & 400 & \\ \hline
abalone & 4,177 & 8 & \multirow{3}{*}{regression} \\ \cline{1-3}
YearPredictionMSD & 463,715 & 90 &  \\ \cline{1-3}
synthetic\_reg & 1,000,000 & 400 & \\ \hline
\end{tabular}%
}
\end{table}

\subsection{Test Errors}
\label{sec:test_error}
We first set the number of ensembles to one in DPBoost and the number of trees to 50 for all approaches. Figure~\ref{fig:error_budget} shows the test errors of four approaches with different $\varepsilon$ (i.e., 1, 2, 4, 6, 8, and 10). We have the following observations. First, SEQ has the worst performance on almost all cases. With only sequential composition, each tree in SEQ gets a very small privacy budget. The noises in SEQ are huge and lead to high test errors in the prediction. Second, DPBoost can always outperform PARA and SEQ especially when the given budget is small. DPBoost outperforms PARA, mainly because our tightening bounds on sensitivity allows us to use a smaller noise to achieve differential privacy. When the privacy budget is one, DPBoost can achieve 10\% lower test errors on average in the classification task and significant reduction on RMSE in the regression tasks. Moreover, the variance of DPBoost is usually close to zero. DPBoost is very stable compared with SEQ and PARA. Last, DPBoost can achieve competitive performance compared with NP. The results of DPBoost and NP are quite close in many cases, which show high model quality of our design.

\begin{figure*}[t]
% \captionsetup[subfloat]{farskip=2pt,captionskip=1pt}
\centering
\subfloat[adult]{\includegraphics[width=0.19\textwidth]{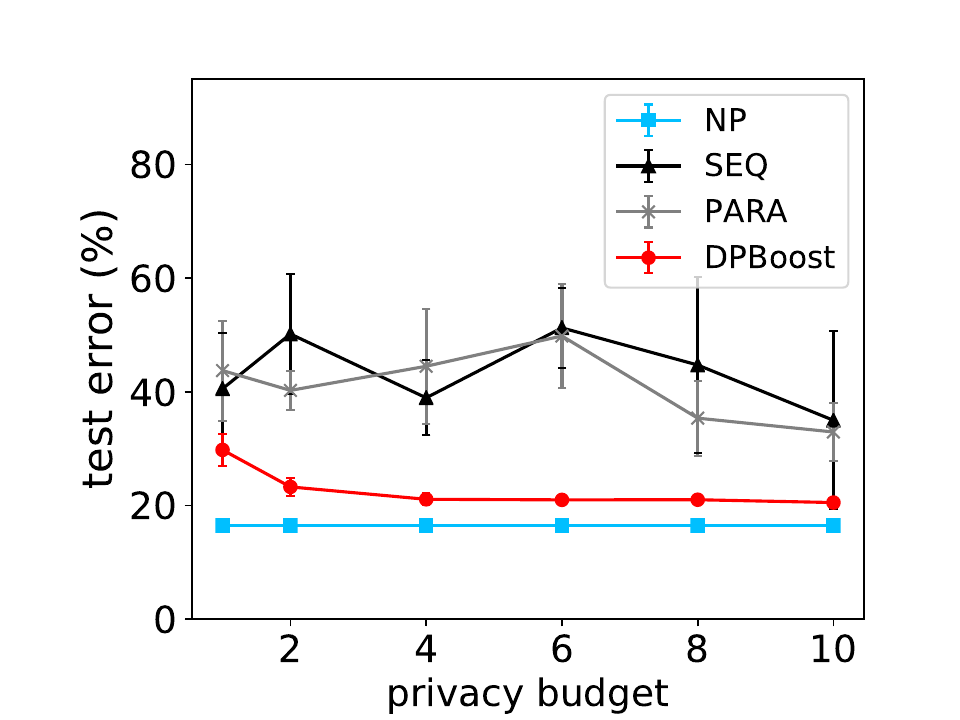}%
}
% \hfil
% \hfil
\subfloat[real-sim]{\includegraphics[width=0.19\textwidth]{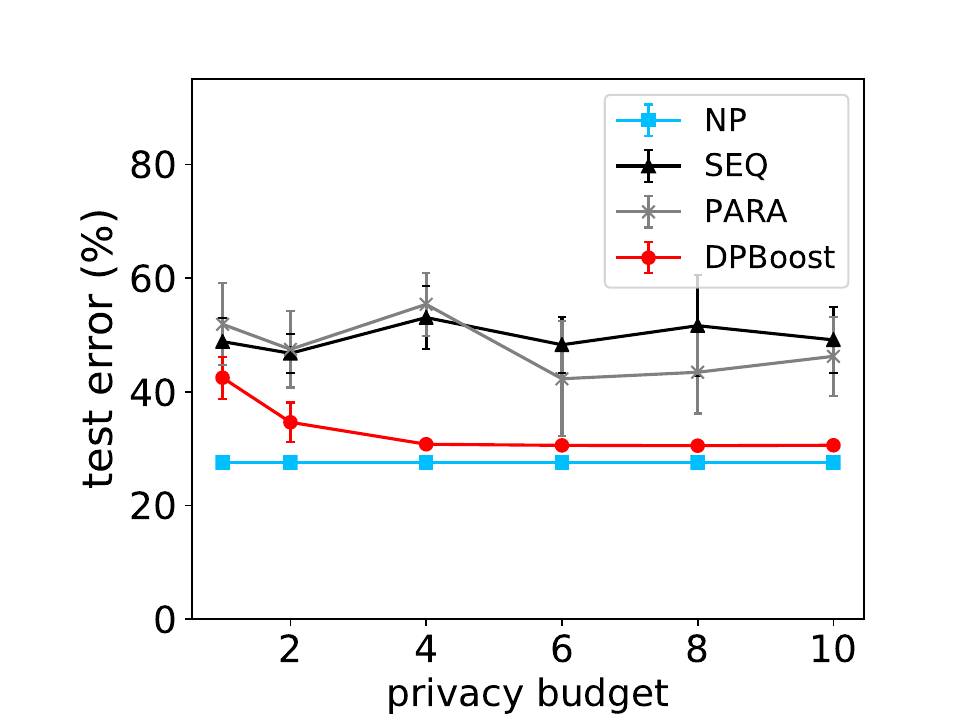}%
}
\subfloat[covtype]{\includegraphics[width=0.19\textwidth]{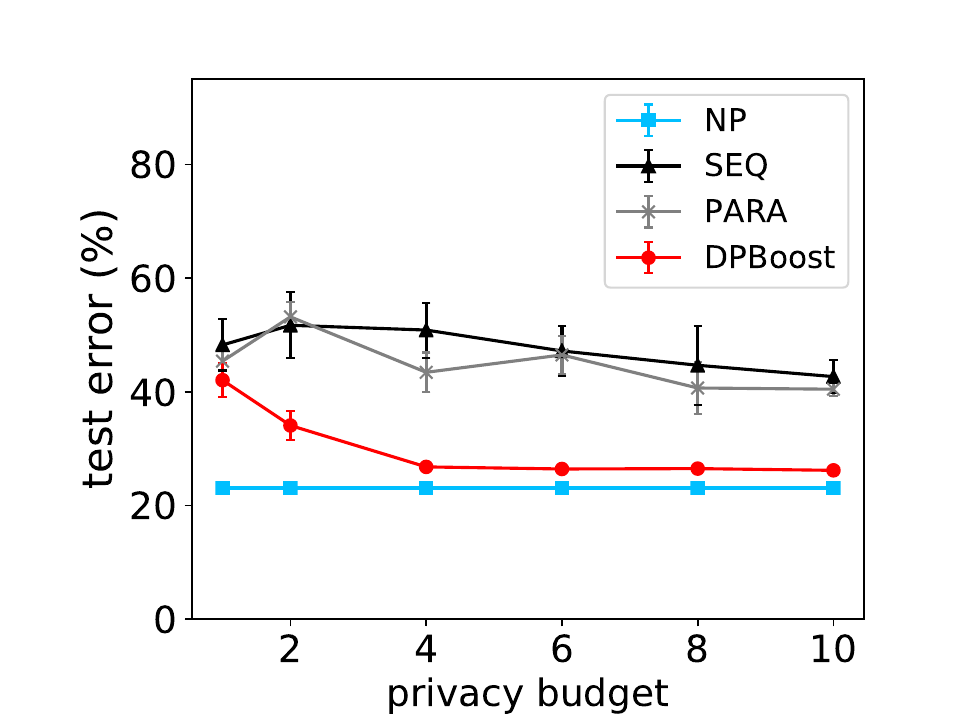}%
}
\subfloat[susy]{\includegraphics[width=0.19\textwidth]{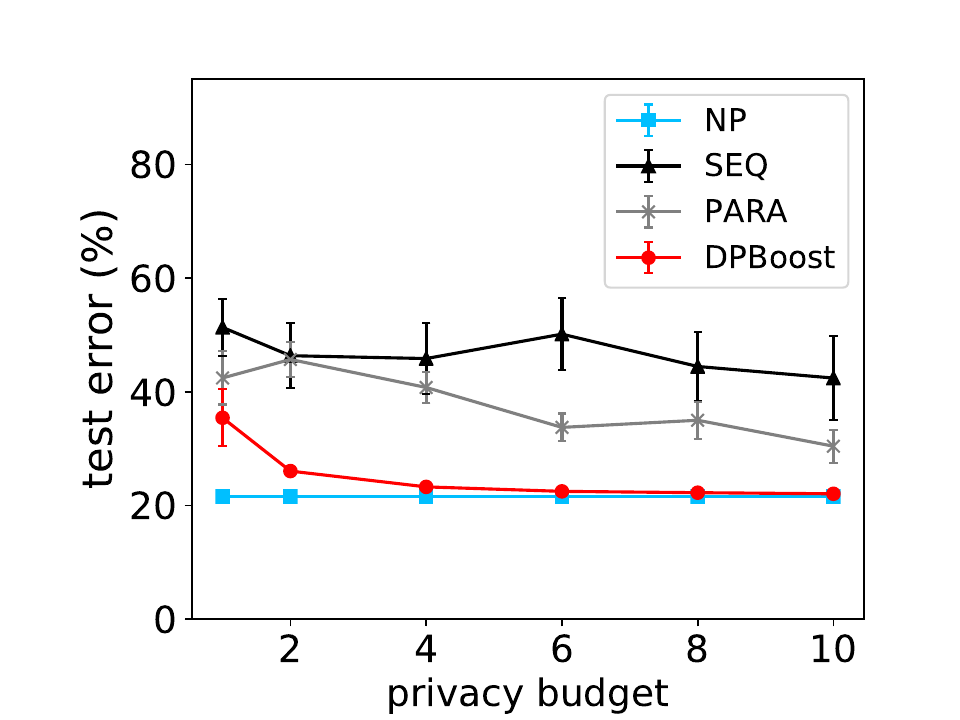}%
}
\subfloat[cod-rna]{\includegraphics[width=0.19\textwidth]{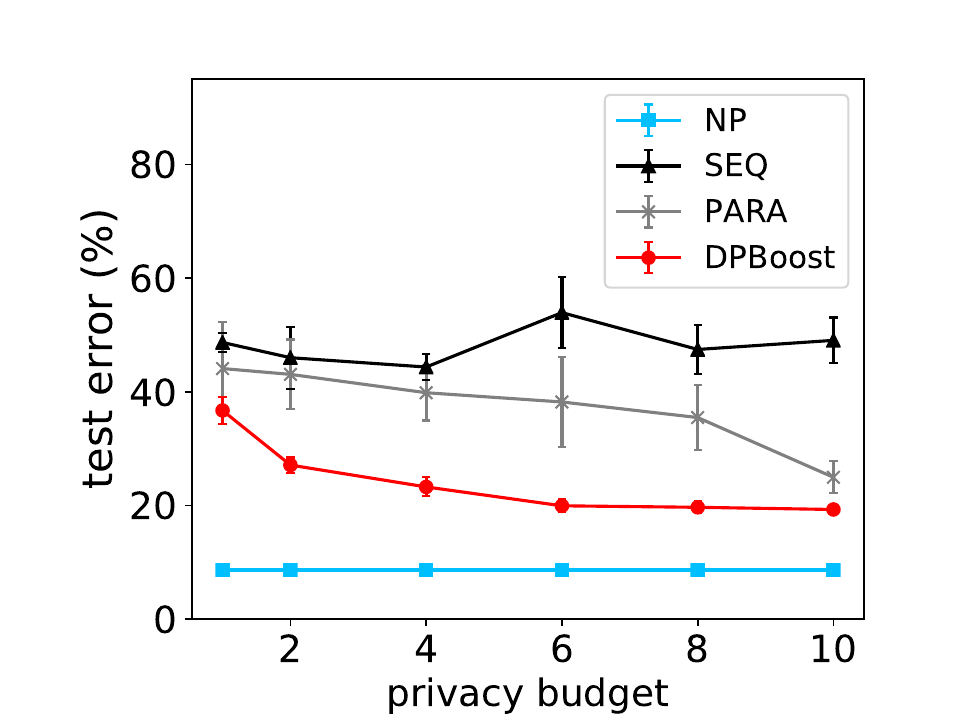}
}
\hfil
\subfloat[webdata]{\includegraphics[width=0.19\textwidth]{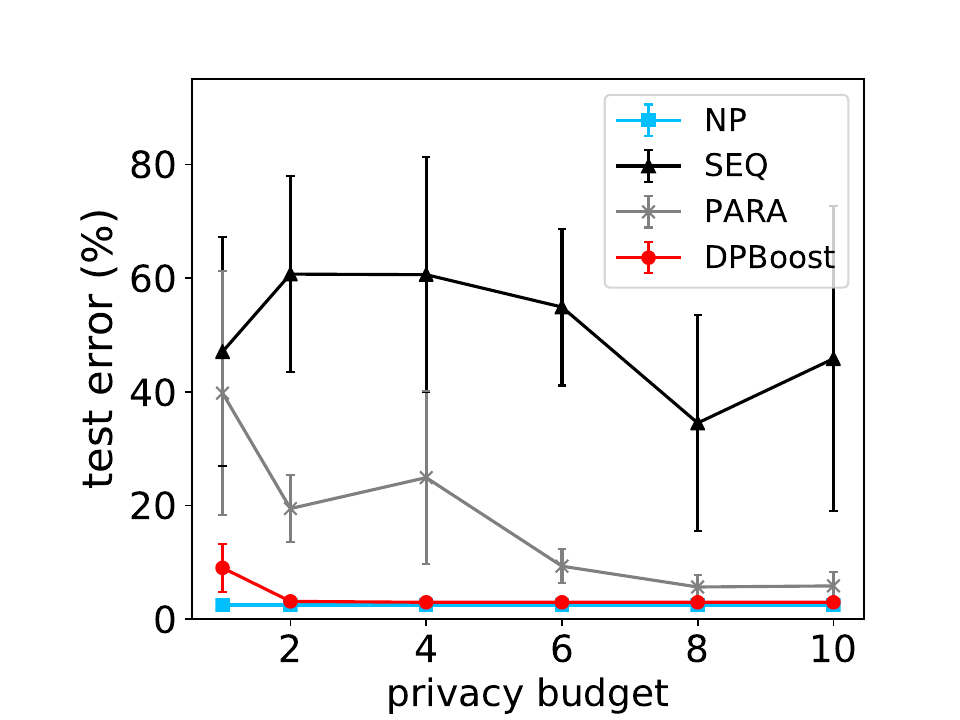}%
}
\subfloat[synthetic\_cls]{\includegraphics[width=0.19\textwidth]{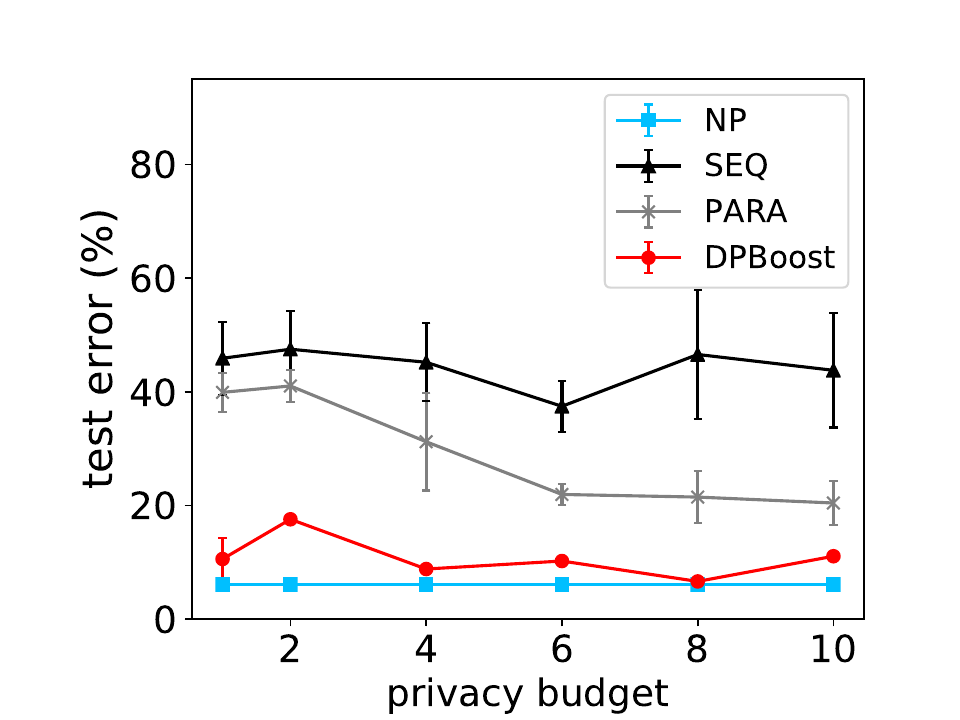}%
}
\subfloat[abalone]{\includegraphics[width=0.19\textwidth]{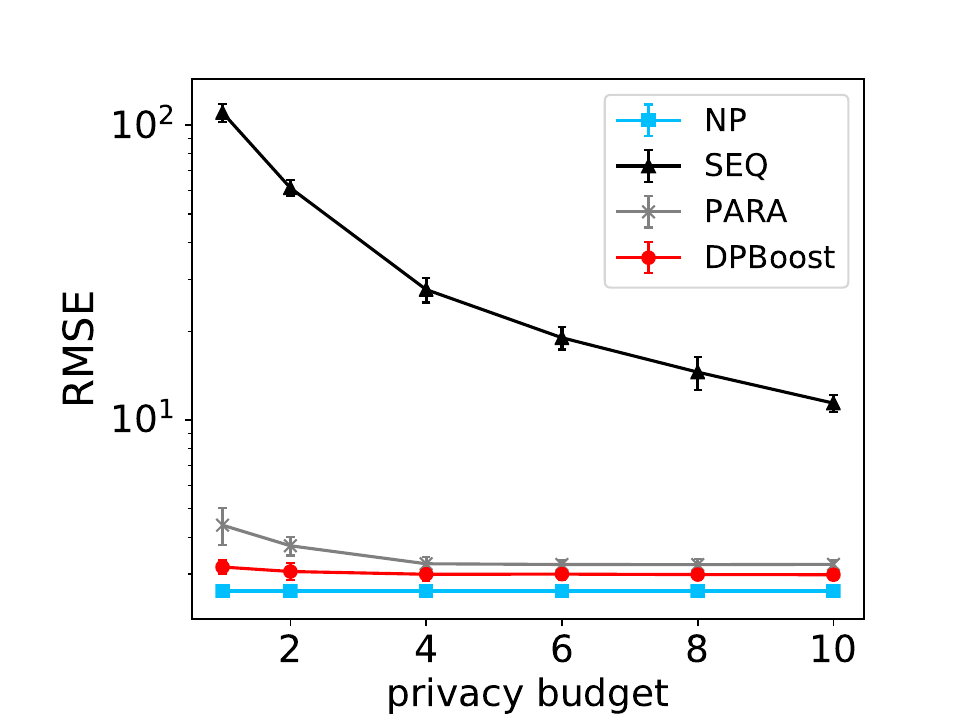}%
}
\subfloat[YearPredictionMSD]{\includegraphics[width=0.19\textwidth]{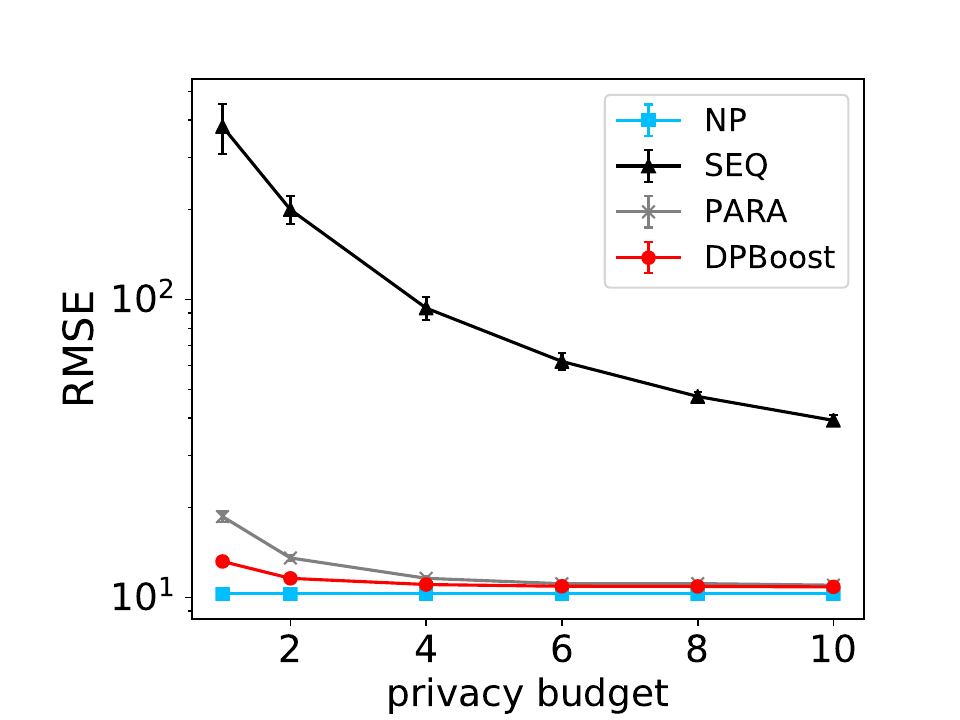}%
}
\subfloat[synthetic\_reg]{\includegraphics[width=0.19\textwidth]{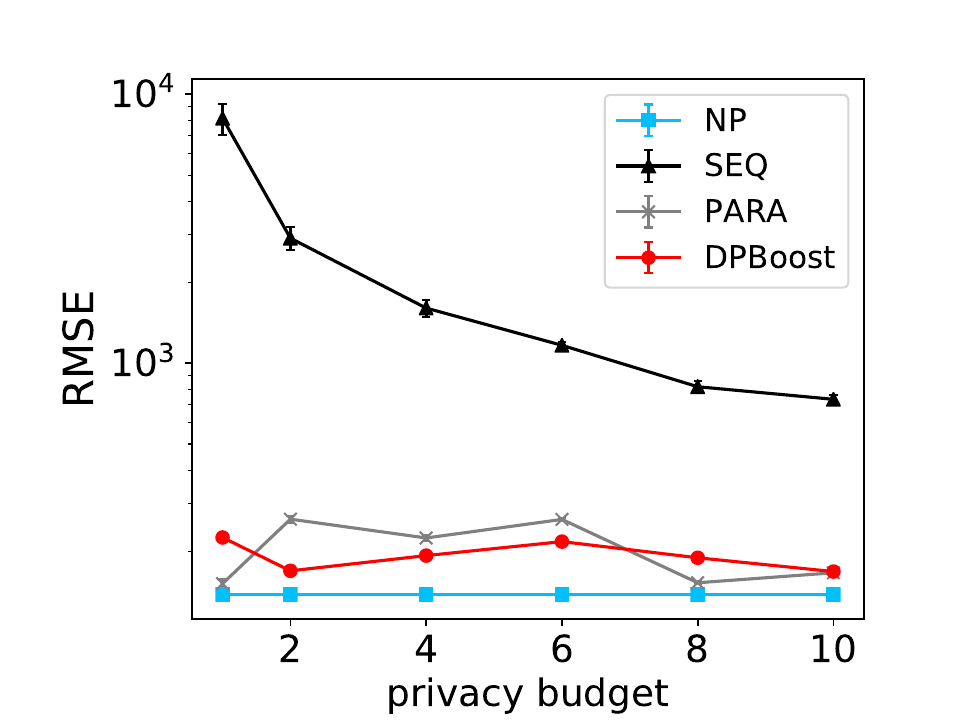}%
}
\caption{Comparison of the test errors/RMSE given different total privacy budgets. The number of trees is set to 50.}
\label{fig:error_budget}
\end{figure*}

\begin{figure*}[t]
% \captionsetup[subfloat]{farskip=2pt,captionskip=1pt}
\centering
\subfloat[adult]{\includegraphics[width=0.19\textwidth]{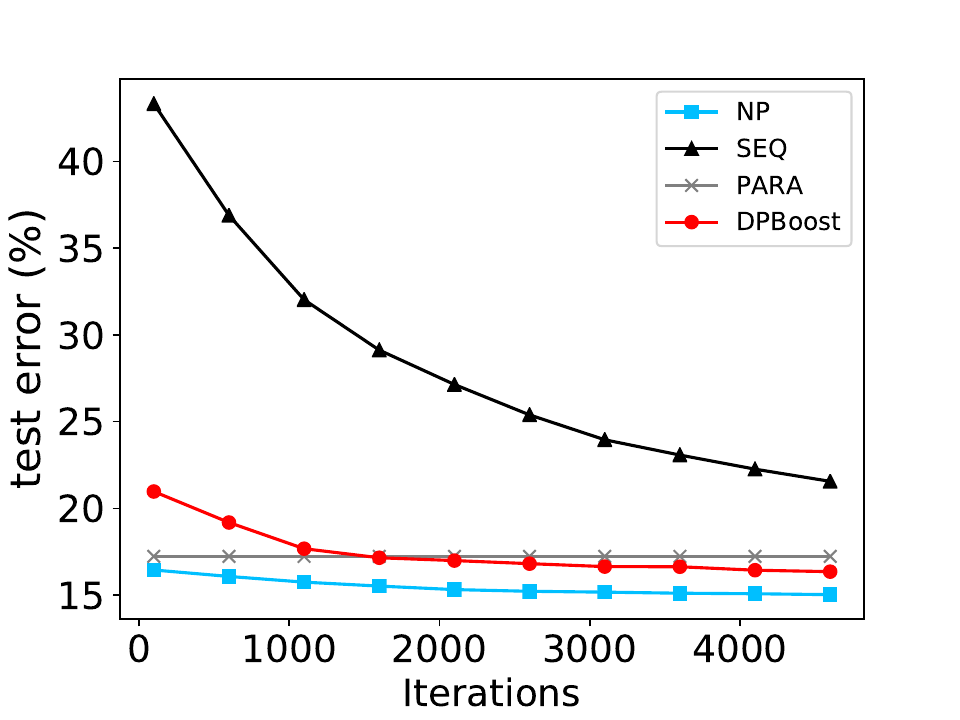}%
}
\subfloat[real-sim]{\includegraphics[width=0.19\textwidth]{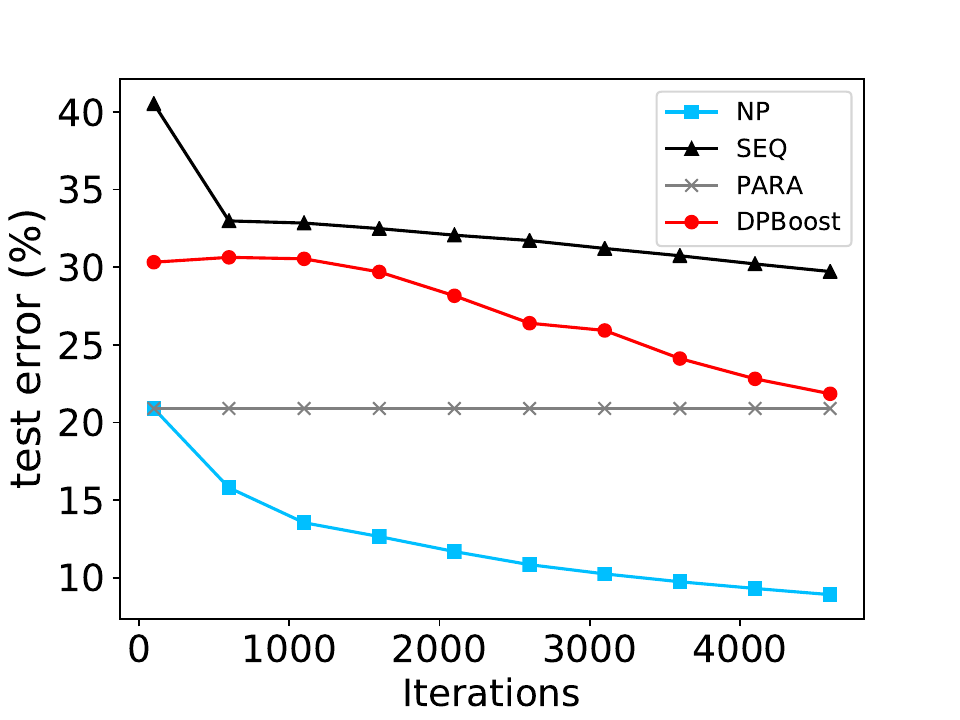}%
}
\subfloat[covtype]{\includegraphics[width=0.19\textwidth]{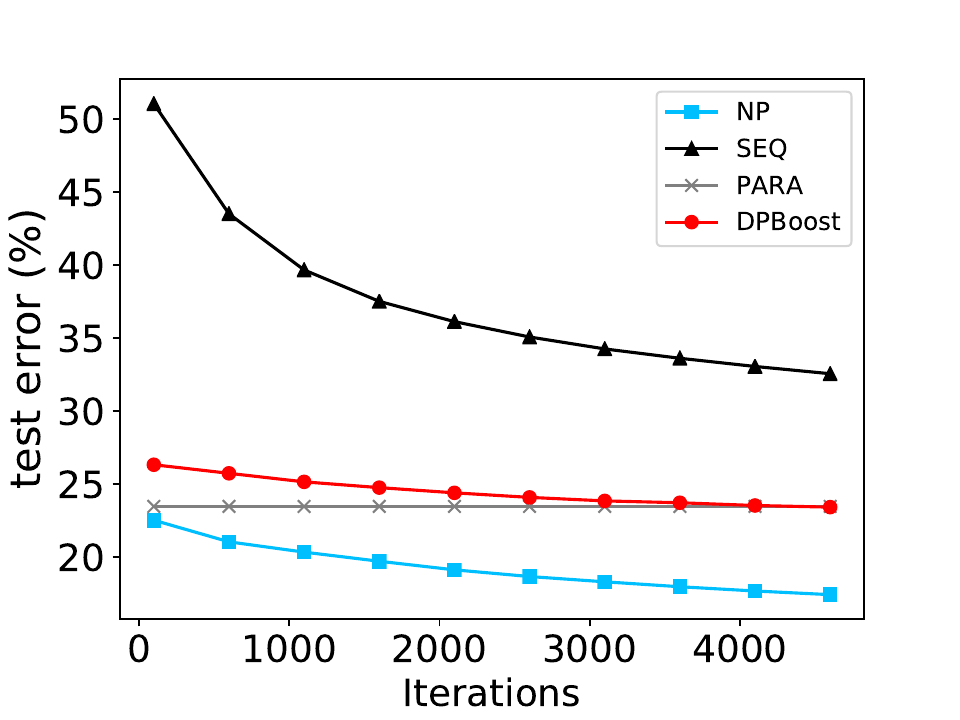}%
}
\subfloat[susy]{\includegraphics[width=0.19\textwidth]{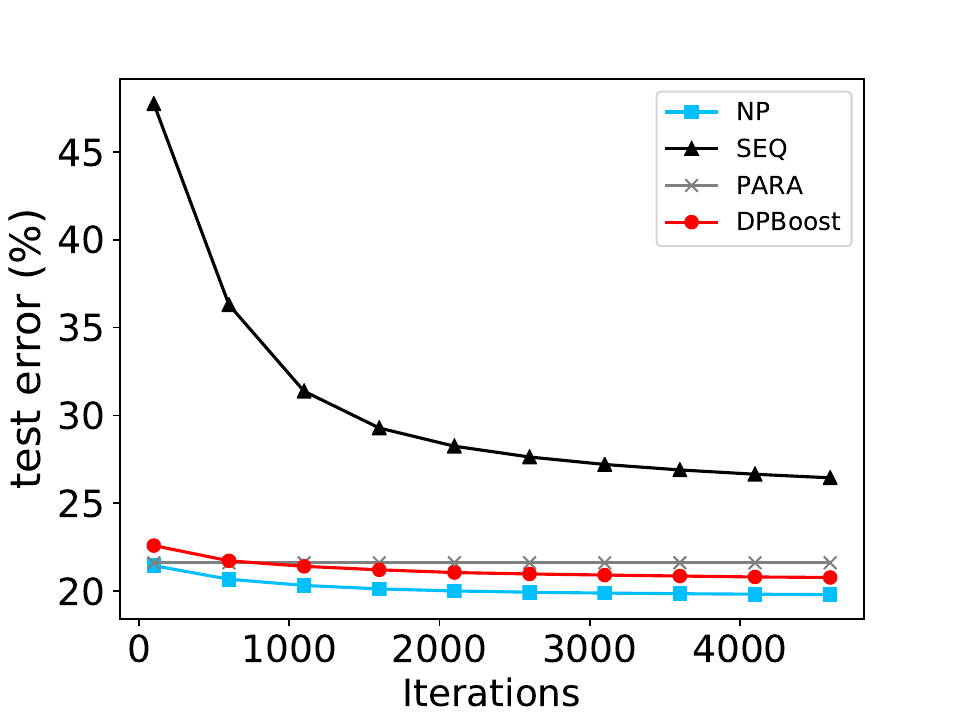}%
}
\subfloat[cod-rna]{\includegraphics[width=0.19\textwidth]{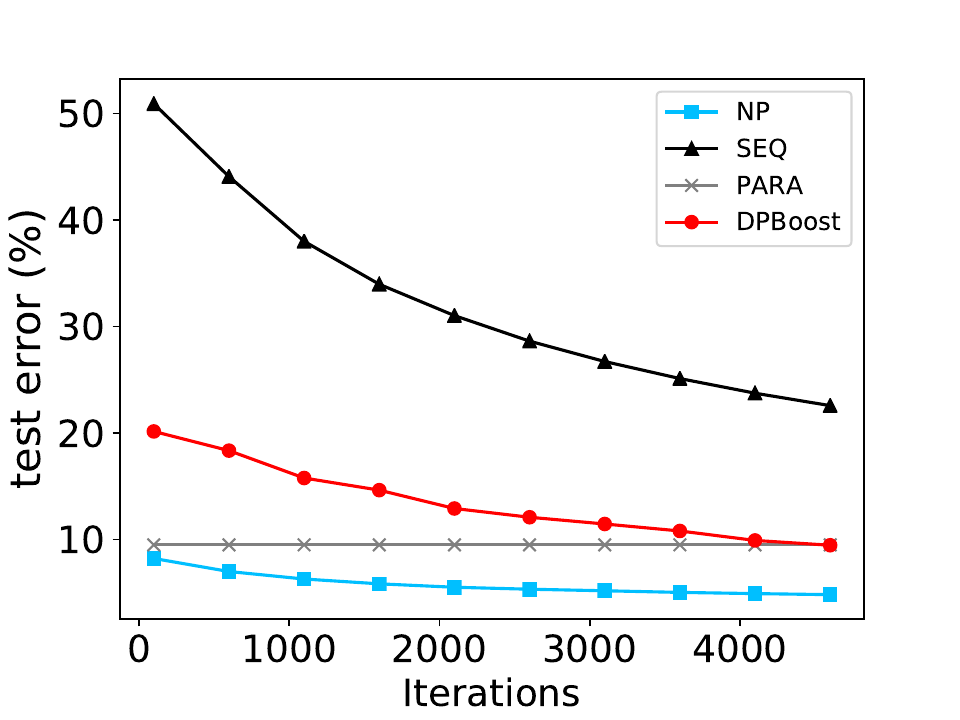}%
}
\caption{Comparison of test error convergence. The number of trees is set to 1000.}
\label{fig:convergence}
\end{figure*}

To show the effect of boosting, we increase the number of ensembles to 100 and the maximum number of trees to 5000. The privacy budget for each ensemble is set to 5. For fairness, the total privacy budget for SEQ and PARA is set to 500 to achieve the same privacy level as DPBoost. We choose the first five datasets as representatives. Figure~\ref{fig:convergence} shows the convergence curves of four approaches. First, since the privacy budget for each tree is small, the errors of SEQ are very high. Second, since PARA takes a half of the unused instances at each iteration, it can only train a limited number of trees until the unused instances are too few to train an effective tree (e.g., about 20 trees for dataset \emph{SUSY}). Then, the curve of PARA quickly becomes almost flat and the performance cannot increase as the iteration grows even given a large total privacy budget. Last, DPBoost has quite good behavior of reducing test errors as the number of trees increases. DPBoost can continue to decrease the accuracy loss and outperform PARA and SEQ on four datasets with 5000 trees, which demonstrate the effectiveness of our privacy budget allocation. Also, DPBoost can preserve the effect of boosting well.

\begin{table}[h]
\centering
\caption{Training time per tree (second) of DPBoost and NP.}
\label{tbl:time}
% \resizebox{0.46\textwidth}{!}{%
\begin{tabular}{|c|c|c|}
\hline
datasets & DPBoost & NP  \\ \hline
adult & 0.019 & 0.007  \\ \hline
real-sim & 2.97 & 0.82    \\ \hline
covtype & 0.085 & 0.044 \\ \hline
SUSY & 0.38 & 0.32  \\ \hline
cod-rna & 0.016 & 0.009 \\ \hline
webdata & 0.032 & 0.013  \\ \hline
synthetic\_cls & 1.00 & 0.36 \\ \hline
abalone & 2.95 & 2.85  \\ \hline
YearPrediction & 0.38 & 0.12  \\ \hline
synthetic\_reg & 0.96 & 0.36  \\ \hline
\end{tabular}%
% }
\end{table}
\subsection{Training Time Efficiency}
We show the training time comparison between DPBoost and NP. The computation overhead of our approach mainly comes from the exponential mechanism, which computes a probability for each gain. Thus, this overhead depends on the number of split values and increases as the number of dimensions of training data increases. Table~\ref{tbl:time} shows the average training time per tree of DPBoost and NP. The setting is the same as the second experiment of Section~\ref{sec:test_error}. The training time per tree of DPBoost is comparable to NP in many cases (meaning that the overhead can be very small), or about 2 to 3 times slower than NP in other cases. Nevertheless, the training of DPBoost is very fast. The time per tree of DPBoost is no more than 3 seconds in those 10 datasets.

\section{Conclusions}
Differential privacy has been an effective mechanism for protecting data privacy. Since GBDT has become a popular and widely used training system for many machine learning and data mining applications, we propose a differentially private GBDT algorithm called DPBoost. It addresses the limitations of previous works on serious accuracy loss due to loose sensitivity bounds and ineffective privacy budget allocations. Specifically, we propose gradient-based data filtering and geometric leaf clipping to control the training process in order to tighten the sensitivity bound. Moreover, we design a two-level boosting framework to well exploit both the privacy budget and the effect of boosting. Our experiments show the effectiveness and efficiency of DPBoost. 
% As future work, we plan to study DPBoost in the federated learning environment~\cite{li2019federated}.

\section*{Acknowledgements}
This work is supported by a MoE AcRF Tier 1 grant (T1 251RES1824) and a MOE Tier 2 grant (MOE2017-T2-1-122) in Singapore. The authors thank Xuejun Zhao for her discussion and Shun Zhang for his comments to improve the paper.
\bibliographystyle{aaai}
\bibliography{aaai20}

\appendix

\section{Proof of Lemma 2}
\begin{proof}
Consider two adjacent instance sets $I_1 = \{\mb{x}_i\}_{i=1}^{n}$ and $I_2 = I_1 \cup \{\mb{x}_s\}$ that differ in a single instance. We have
\begin{equation}
\begin{aligned}
\Delta V &= |-\frac{\sum_{i=1}^n g_i}{n+\lambda}-(-\frac{\sum_{i=1}^{n} g_i + g_{s}}{n+1+\lambda})| \\
    &= |\frac{(n+\lambda) g_{s}-\sum_{i=1}^n g_i}{(n+\lambda)(n+1+\lambda)} | \\
\end{aligned}
\end{equation}
When $g_s = g^*$ and $\sum_{i=1}^n g_i = -ng^*$, the above function can achieve maximum. Thus, we have
\begin{equation}
\Delta V \leq{\frac{(2n+\lambda)g^*}{(n+\lambda)(n+1+\lambda)}} \leq \frac{g^*}{1+\lambda}
\end{equation}
\end{proof}

\section{Experimental Study on GDF and GLC}
We use the regression tasks as the examples to study the effect of gradient-based data filtering and geometric leaf clipping. Table~\ref{tbl:gdf_count} shows the number of instances using gradient-based data filtering. As we can see, the percentage of the filtered instances is small, which is no more than 8\%. Thus, the filtering strategy will not produce much approximation error according to Theorem~\ref{theo:error_GBF}.

\begin{table}
\centering
\caption{The number of training instances with/without gradient-based data filtering}
\label{tbl:gdf_count}
\resizebox{\columnwidth}{!}{%
\begin{tabular}{|c|c|c|c|}
\hline
 & w/ GDF & w/o GDF & filtered ratio \\ \hline
abalone & 3340 & 3292 & 1.44\% \\ \hline
YearPredictionMSD & 370902 & 340989 & 8.06\% \\ \hline
sklearn\_reg & 800000 & 799999 & 0 \\ \hline
\end{tabular}%
}
\end{table}

Figure~\ref{fig:geo} shows the results with and without geometric leaf clipping in our DPBoost. The geometric leaf clipping can always improve the performance of DPBoost. The improvement is quite significant in \emph{YearPrediction} and \emph{synthetic\_reg}.

\begin{figure}
\centering
\subfloat[abalone]{\includegraphics[width=0.16\textwidth]{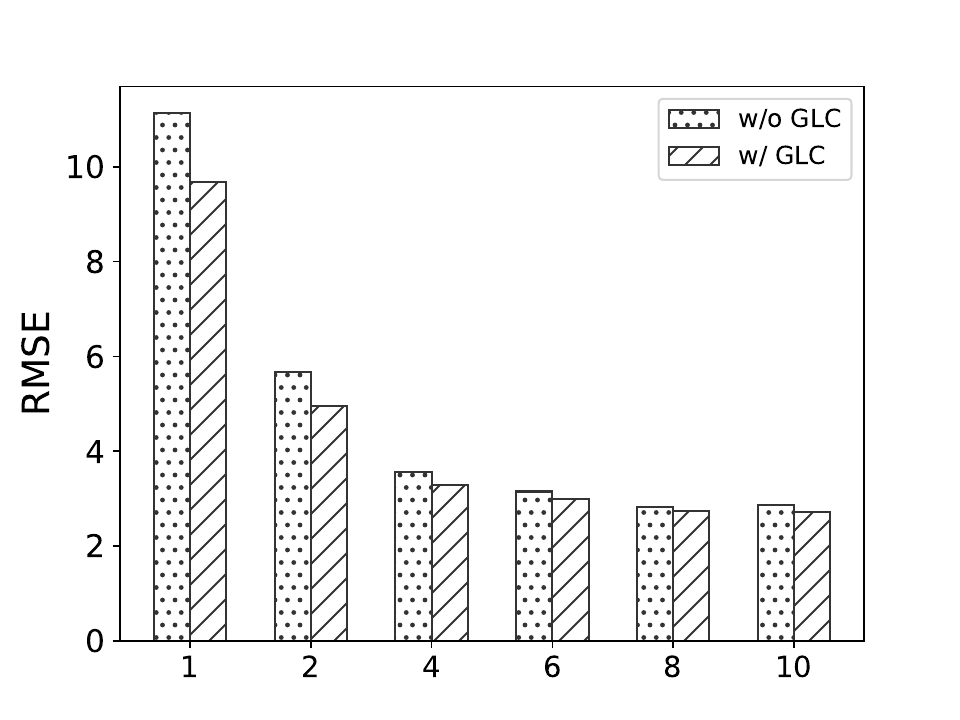}%The results of 1000 trees are shown in Figure 
}
\subfloat[YearPrediction]{\includegraphics[width=0.16\textwidth]{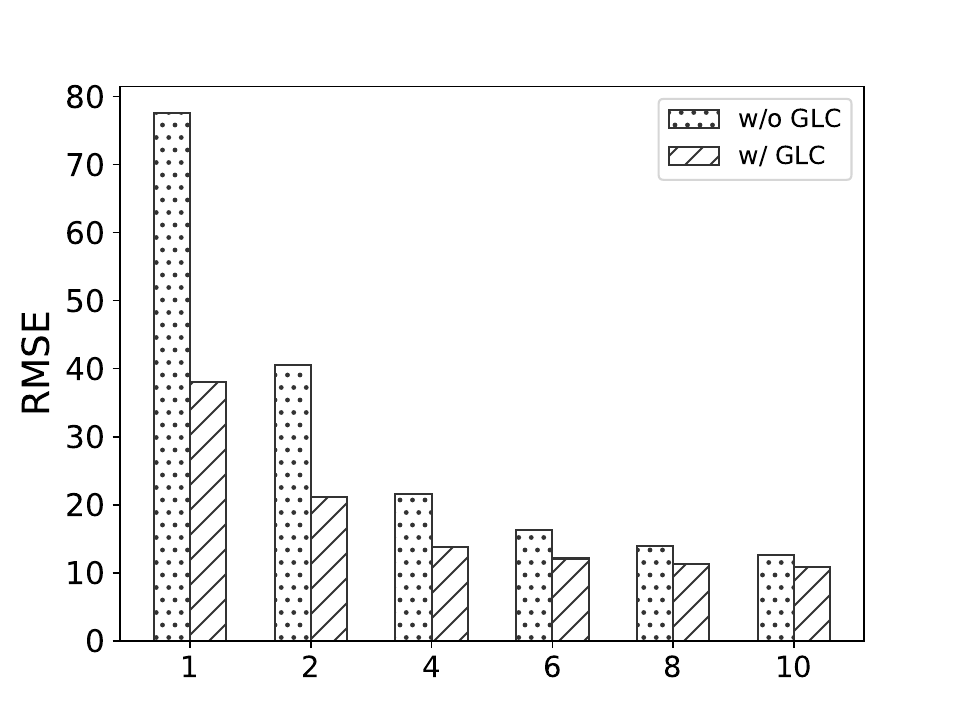}%
}
\subfloat[synthetic\_reg]{\includegraphics[width=0.16\textwidth]{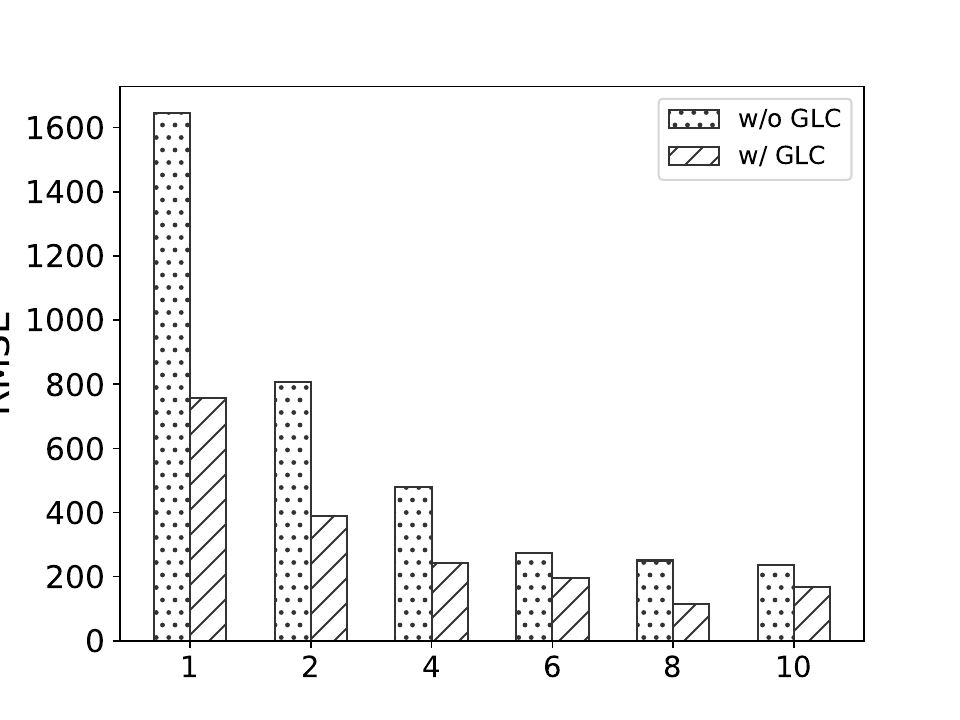}%
}
\caption{The effect of geometric leaf clipping}
\vspace{-10pt}
\label{fig:geo}
\end{figure}

\section{Additional Experiments}
Here we show the results of a different number of trees inside an ensemble (i.e., 20 and 40). The results of one ensemble are shown in Figure~\ref{fig:error_budget_20} and Figure~\ref{fig:error_budget_40}. As we can see, DPBoost can always outperform SEQ and PARA especially when the given budget is small. Furthermore, the accuracy of DPBoost is close to NP. Then we set the maximum number of trees to 1000. The results are shown in Figure~\ref{fig:convergence_20} and Figure~\ref{fig:convergence_40}. Still, DPBoost can well exploit the effect of boosting.

\begin{figure}[]
\centering
\subfloat[adult]{\includegraphics[width=0.22\textwidth]{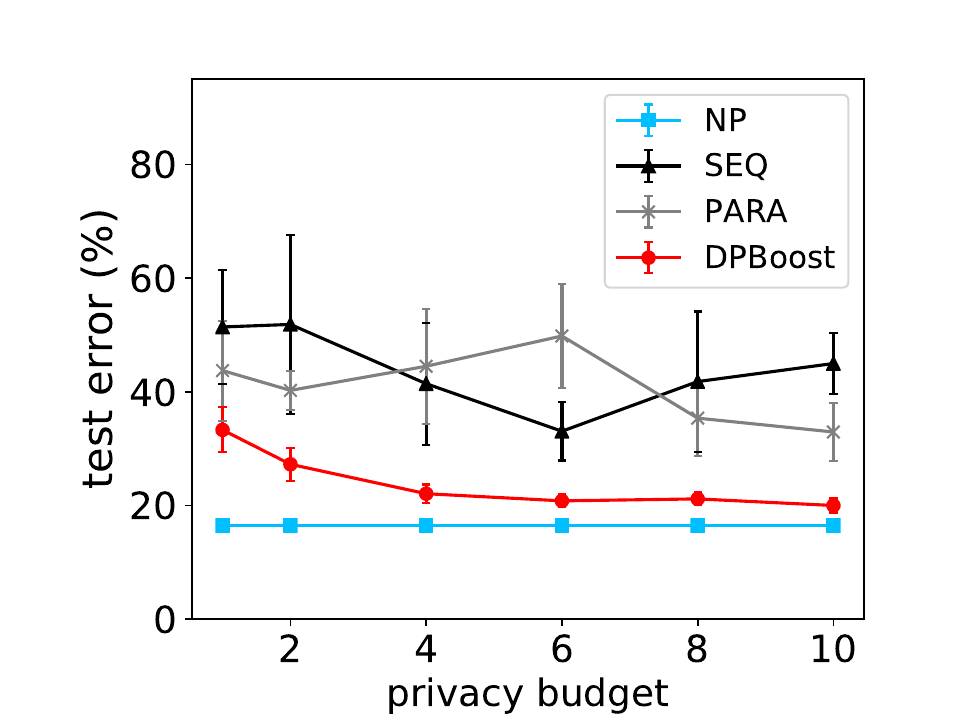}%The results of 1000 trees are shown in Figure 
}
\subfloat[real-sim]{\includegraphics[width=0.22\textwidth]{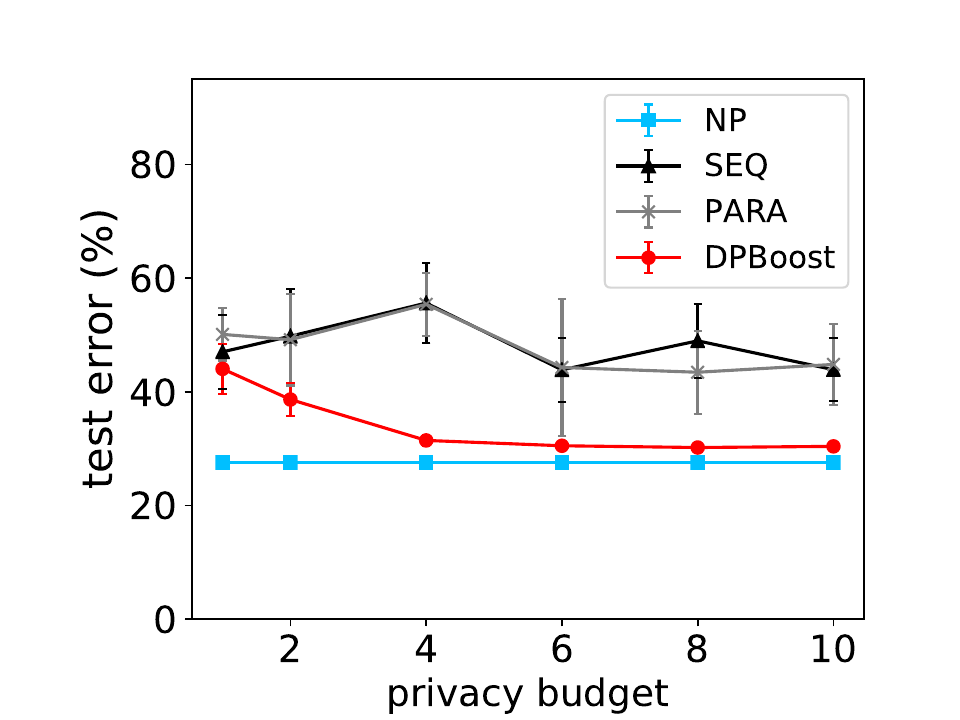}%
}
\hfil
\subfloat[covtype]{\includegraphics[width=0.22\textwidth]{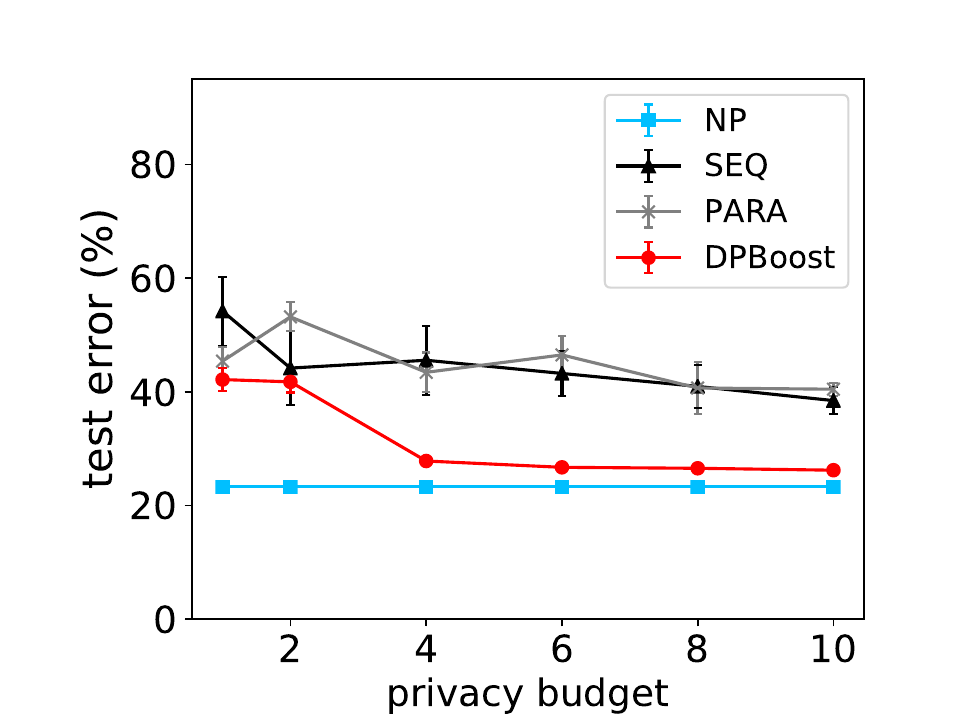}%
}
\subfloat[susy]{\includegraphics[width=0.22\textwidth]{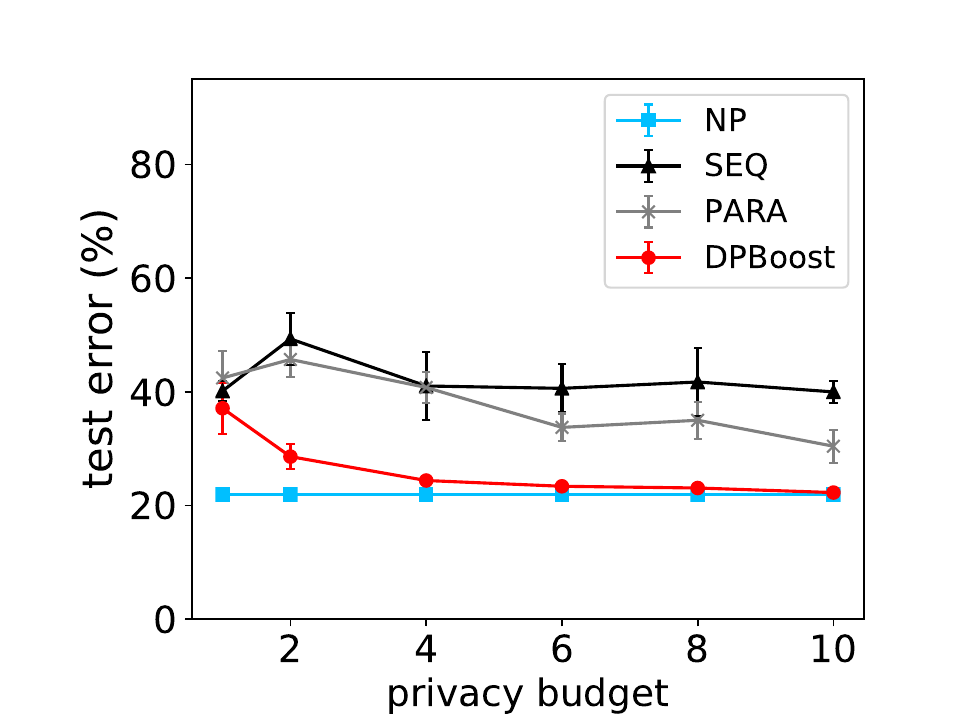}%
}
\hfil
\subfloat[cod-rna]{\includegraphics[width=0.22\textwidth]{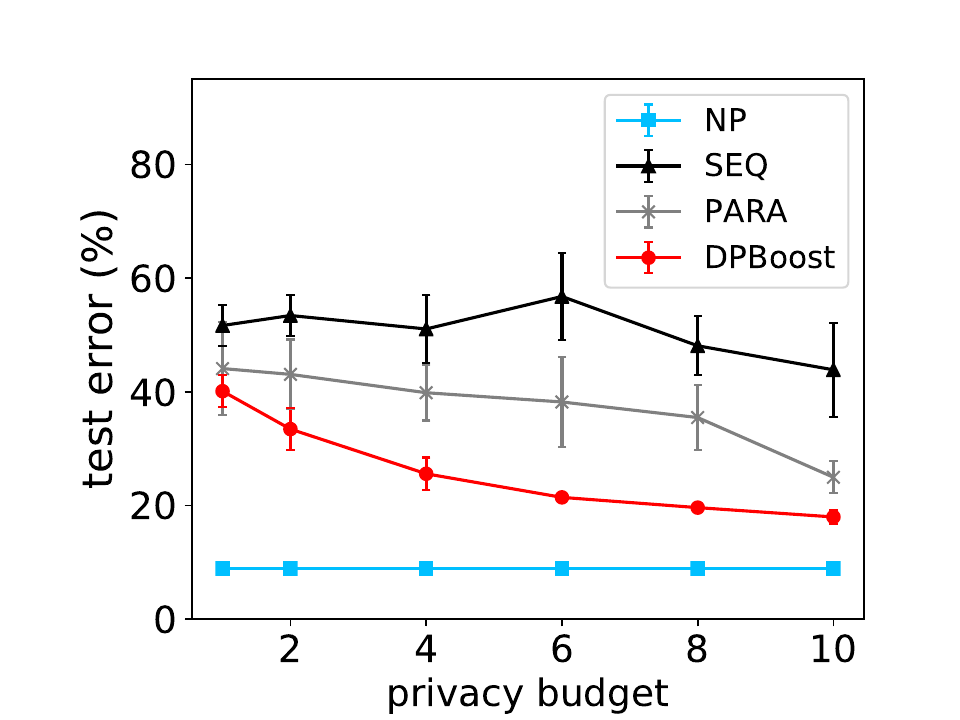}
}
% \hfil
\subfloat[webdata]{\includegraphics[width=0.22\textwidth]{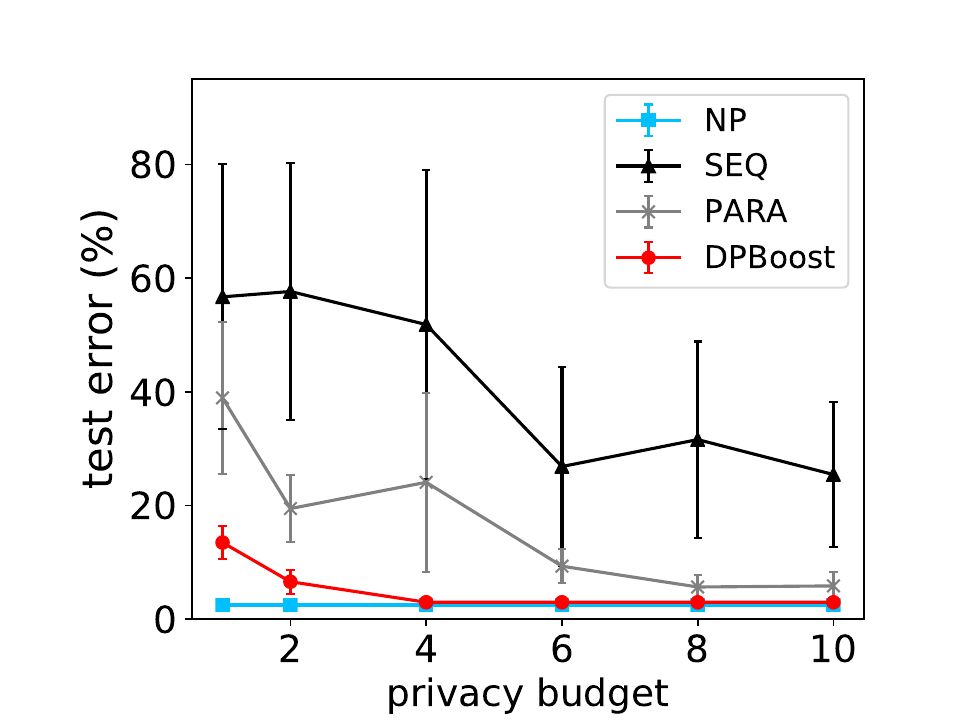}%
}
\hfil
\subfloat[synthetic\_cls]{\includegraphics[width=0.22\textwidth]{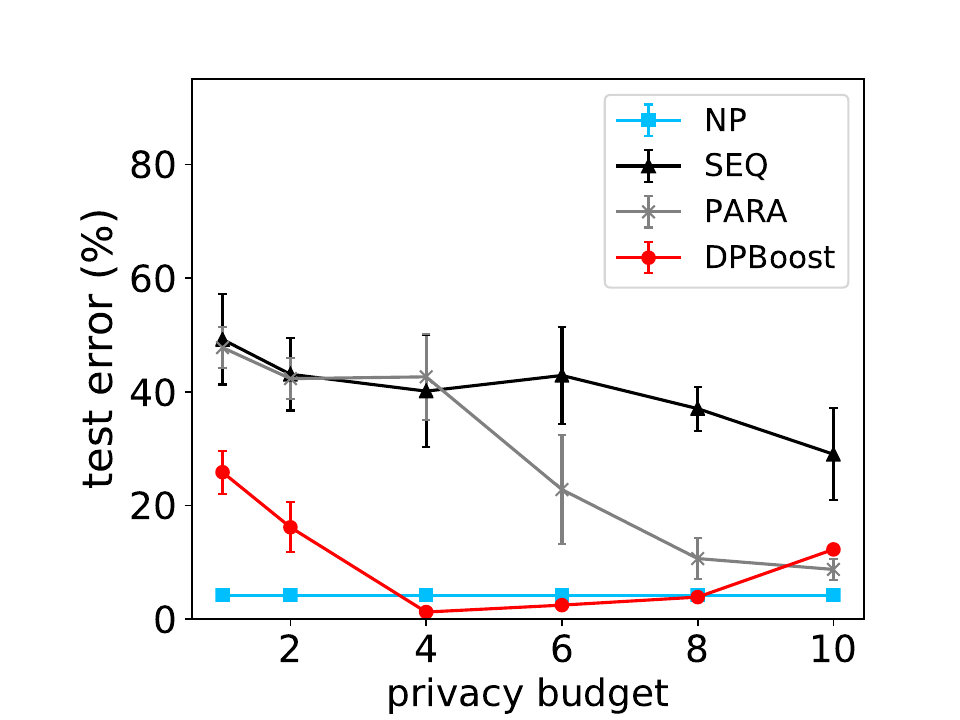}%
}
\subfloat[abalone]{\includegraphics[width=0.22\textwidth]{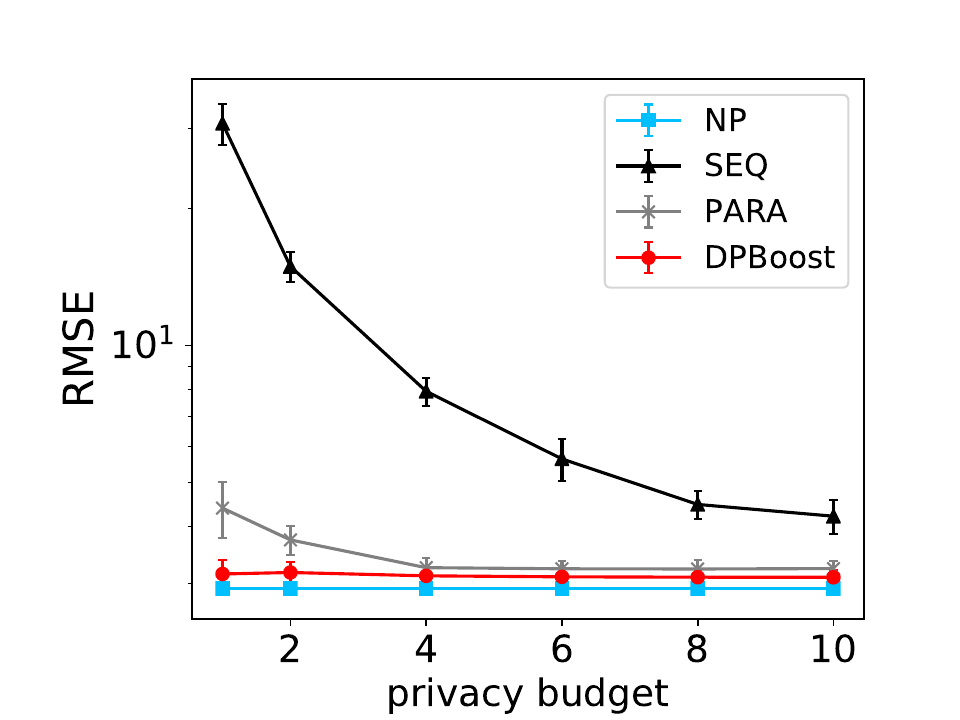}%
}
\hfil
\subfloat[YearPredictionMSD]{\includegraphics[width=0.22\textwidth]{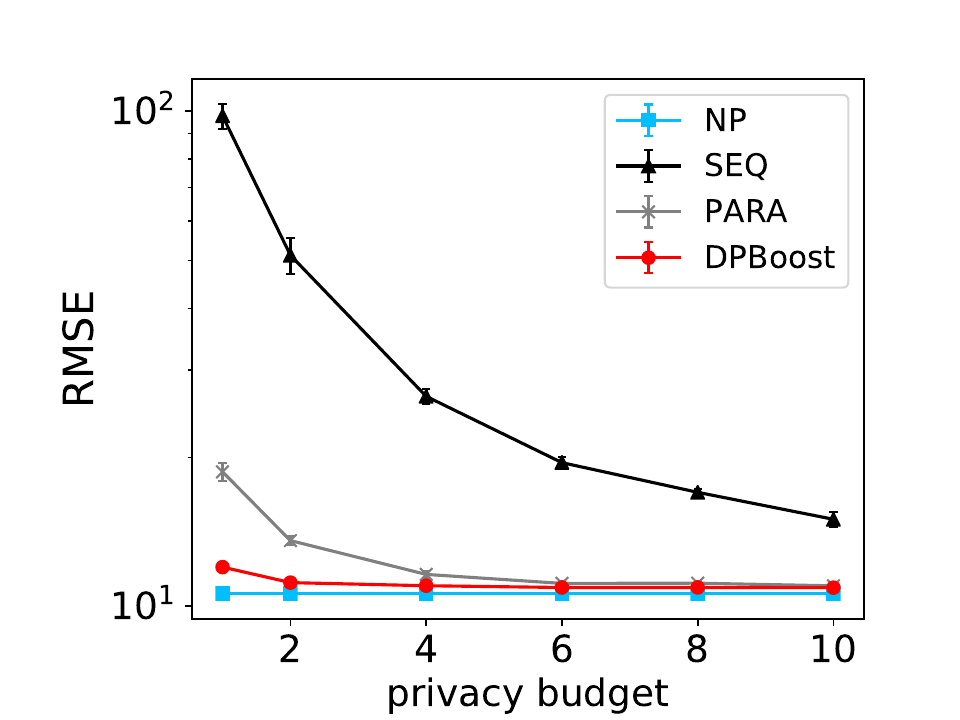}%
}
\subfloat[synthetic\_reg]{\includegraphics[width=0.22\textwidth]{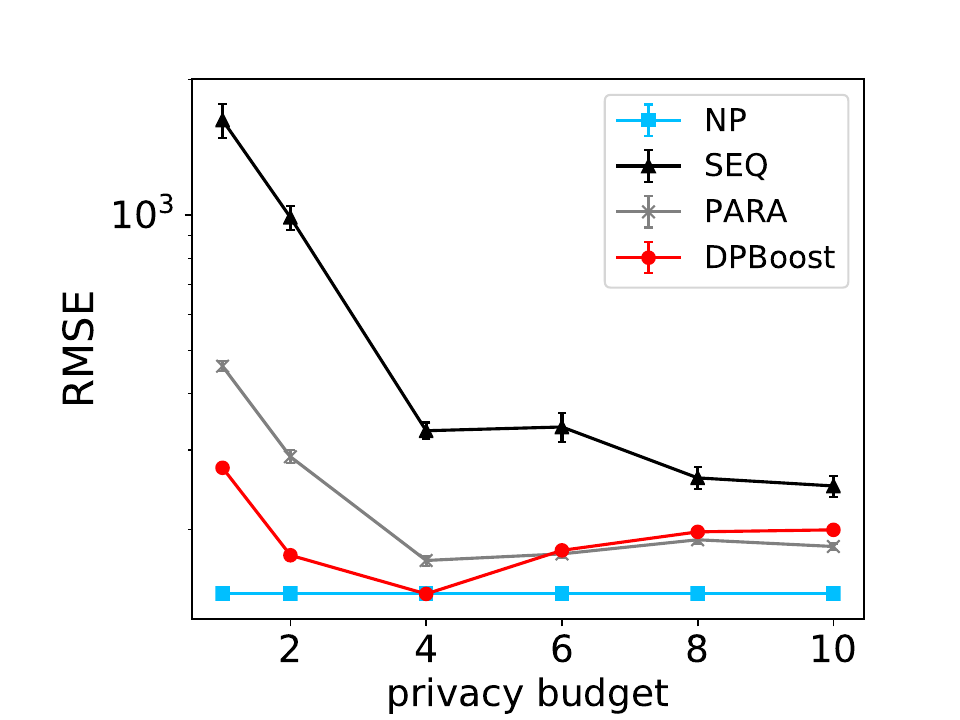}%
}
\caption{Comparison of the test errors/RMSE given different total privacy budgets. The number of trees and the number of trees inside an ensemble is set to 20.}
\vspace{-10pt}
\label{fig:error_budget_20}
\end{figure}

\begin{figure*}
\centering
\subfloat[adult]{\includegraphics[width=0.2\textwidth]{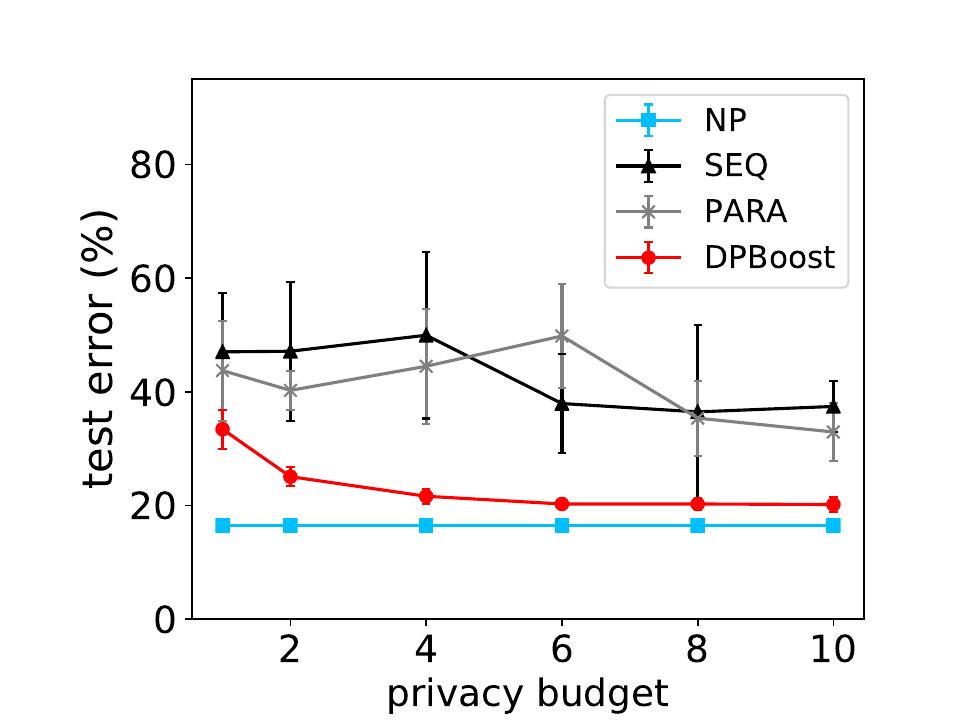}%
}
\subfloat[real-sim]{\includegraphics[width=0.2\textwidth]{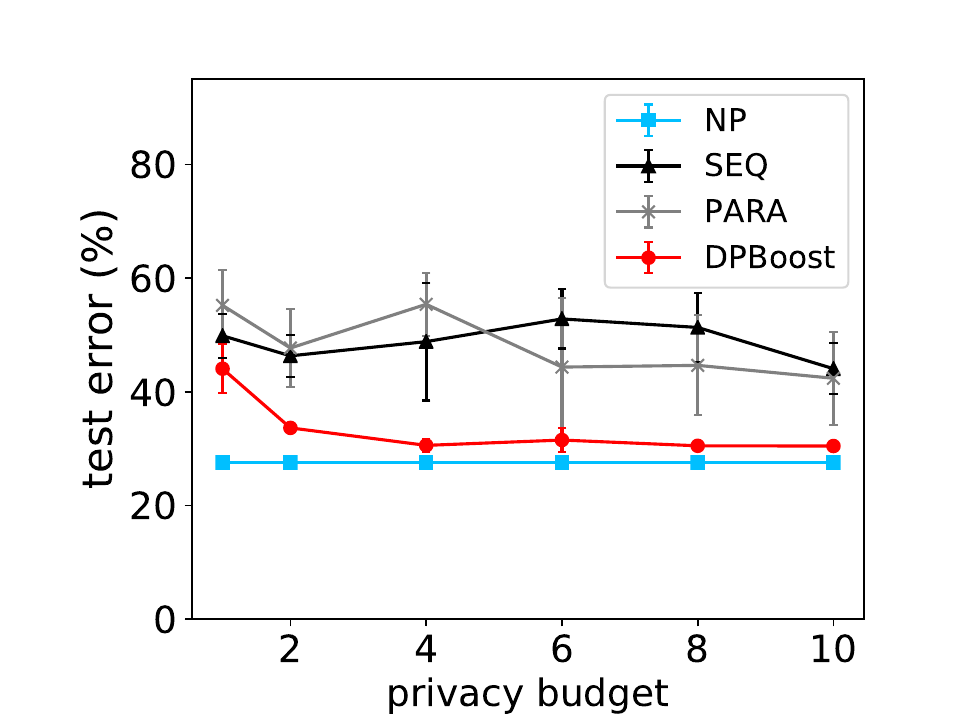}%
}
\subfloat[covtype]{\includegraphics[width=0.2\textwidth]{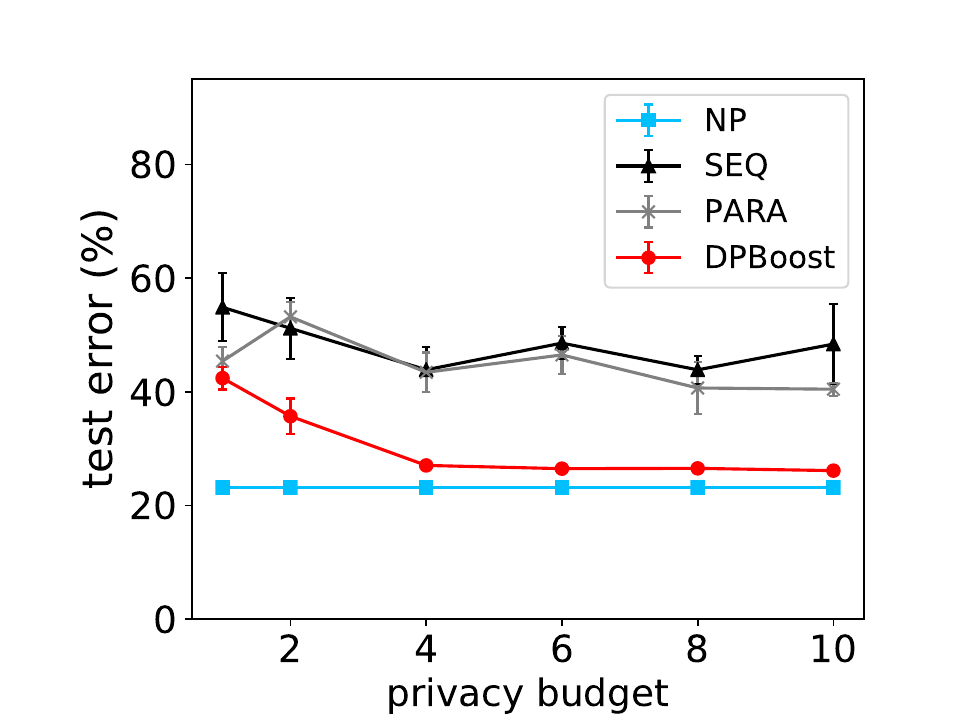}%
}
\subfloat[susy]{\includegraphics[width=0.2\textwidth]{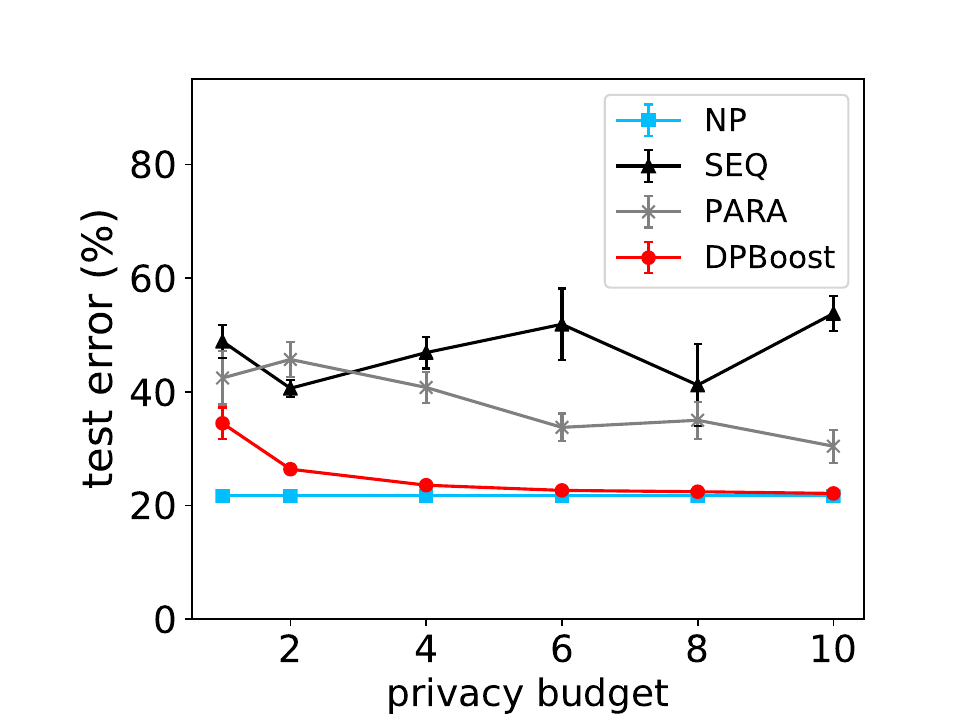}%
}
\subfloat[cod-rna]{\includegraphics[width=0.2\textwidth]{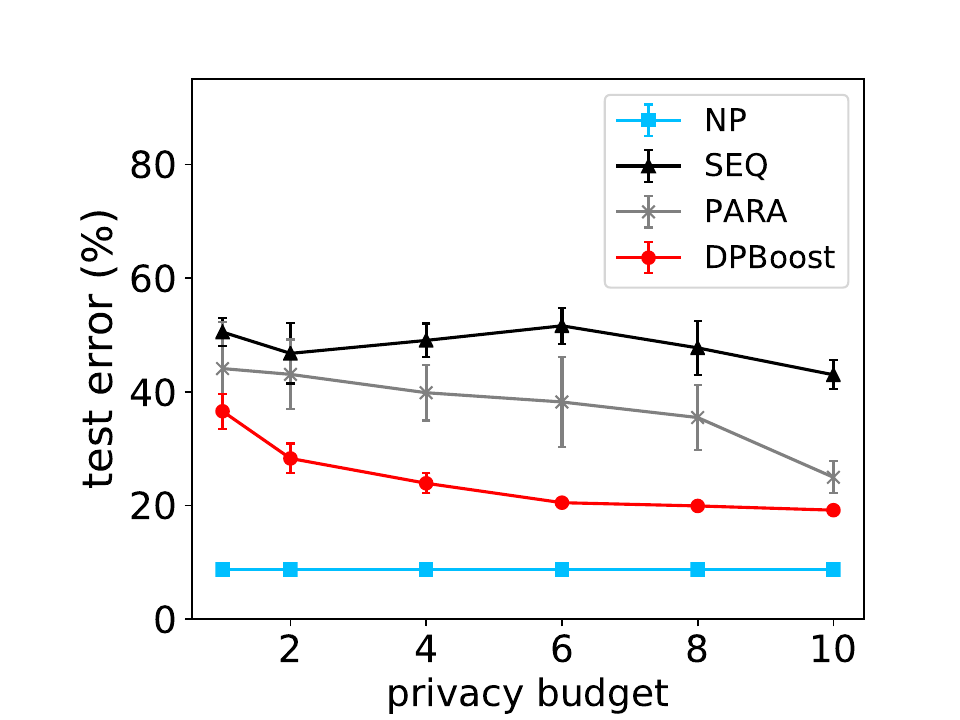}
}
\hfil
\subfloat[webdata]{\includegraphics[width=0.2\textwidth]{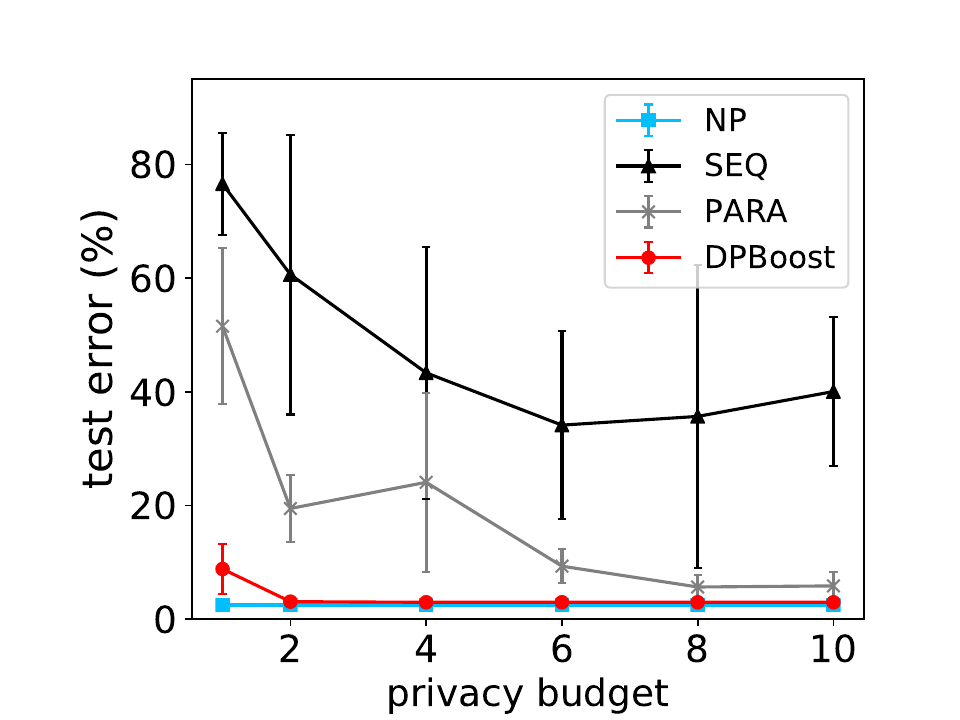}%
}
\subfloat[synthetic\_cls]{\includegraphics[width=0.2\textwidth]{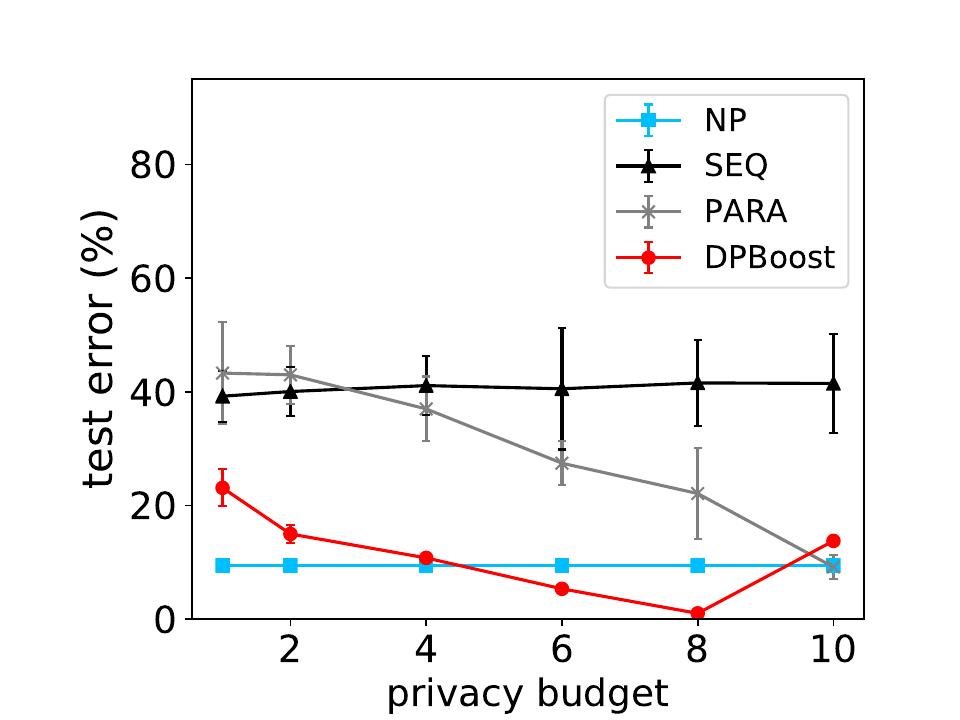}%
}
\subfloat[abalone]{\includegraphics[width=0.2\textwidth]{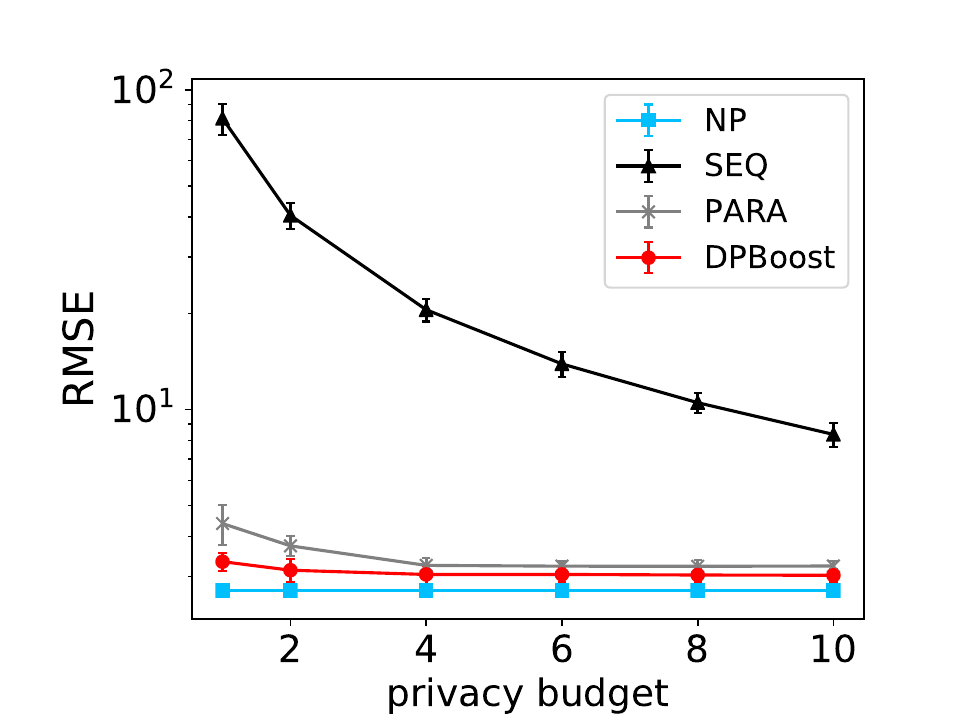}%
}
\subfloat[YearPredictionMSD]{\includegraphics[width=0.2\textwidth]{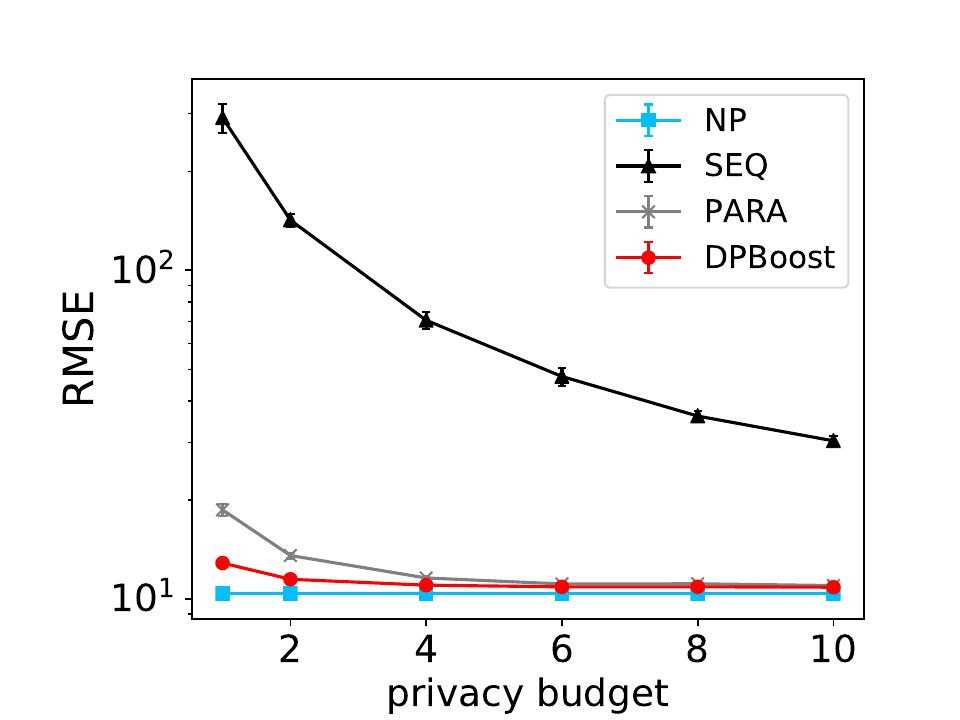}%
}
\subfloat[synthetic\_reg]{\includegraphics[width=0.2\textwidth]{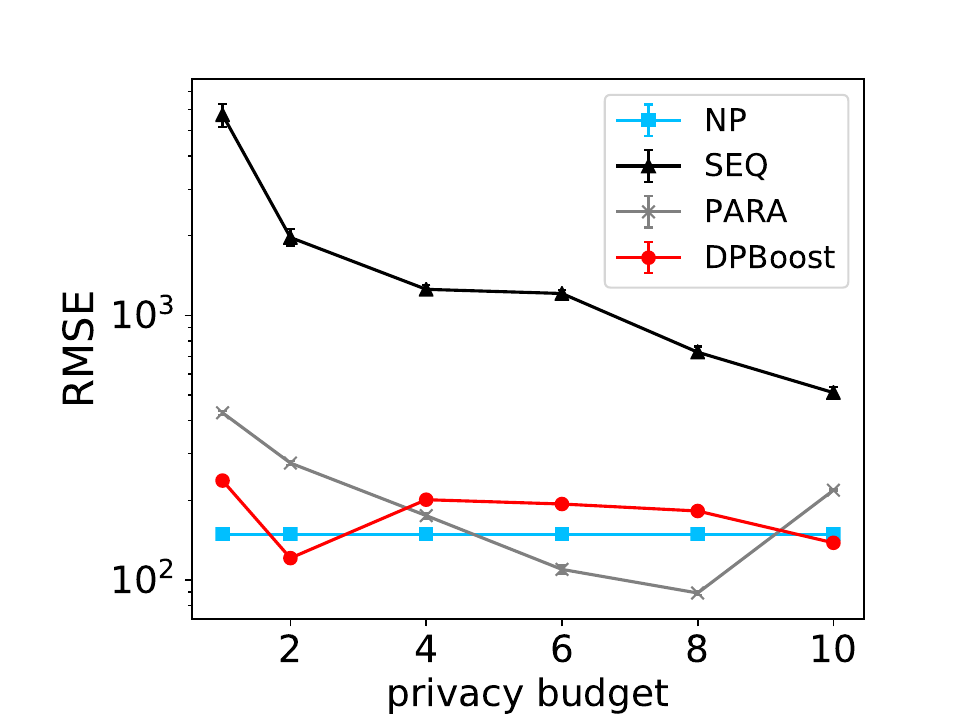}%
}
\caption{Comparison of the test errors/RMSE given different total privacy budgets. The number of trees and the number of trees inside an ensemble is set to 40.}
\label{fig:error_budget_40}
\end{figure*}

\begin{figure*}
\centering
\subfloat[adult]{\includegraphics[width=0.2\textwidth]{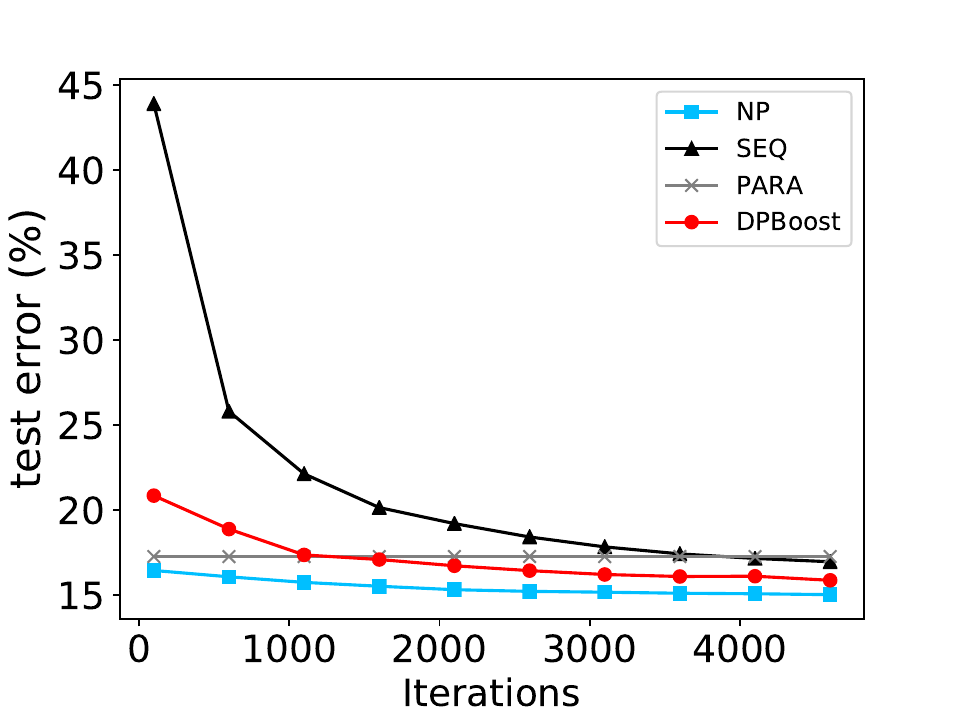}%
}
\subfloat[real-sim]{\includegraphics[width=0.2\textwidth]{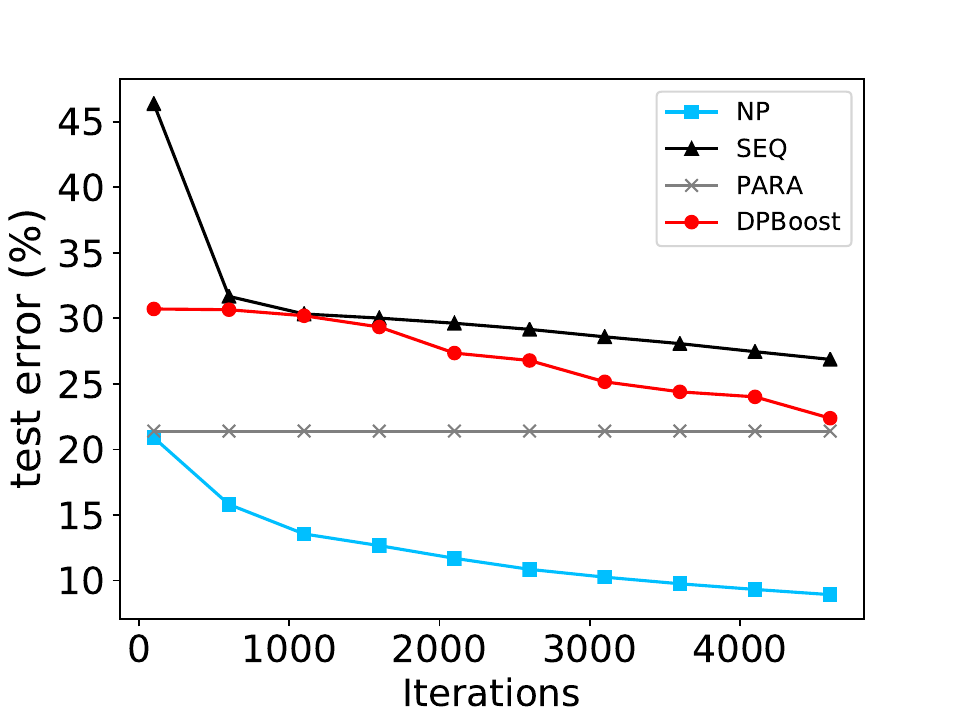}%
}
\subfloat[covtype]{\includegraphics[width=0.2\textwidth]{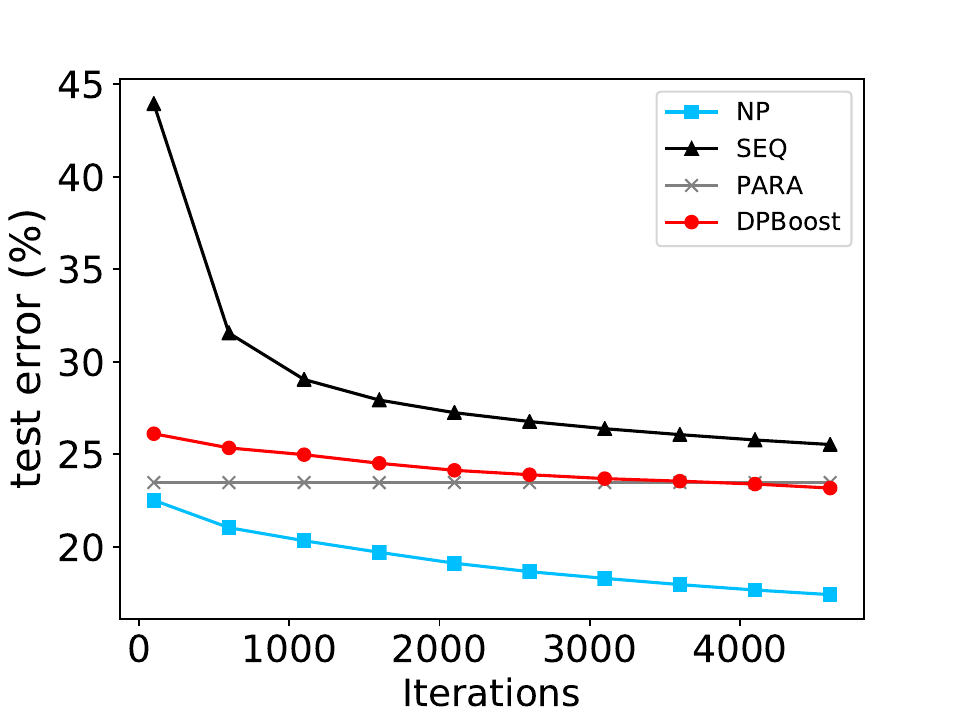}%
}
\subfloat[susy]{\includegraphics[width=0.2\textwidth]{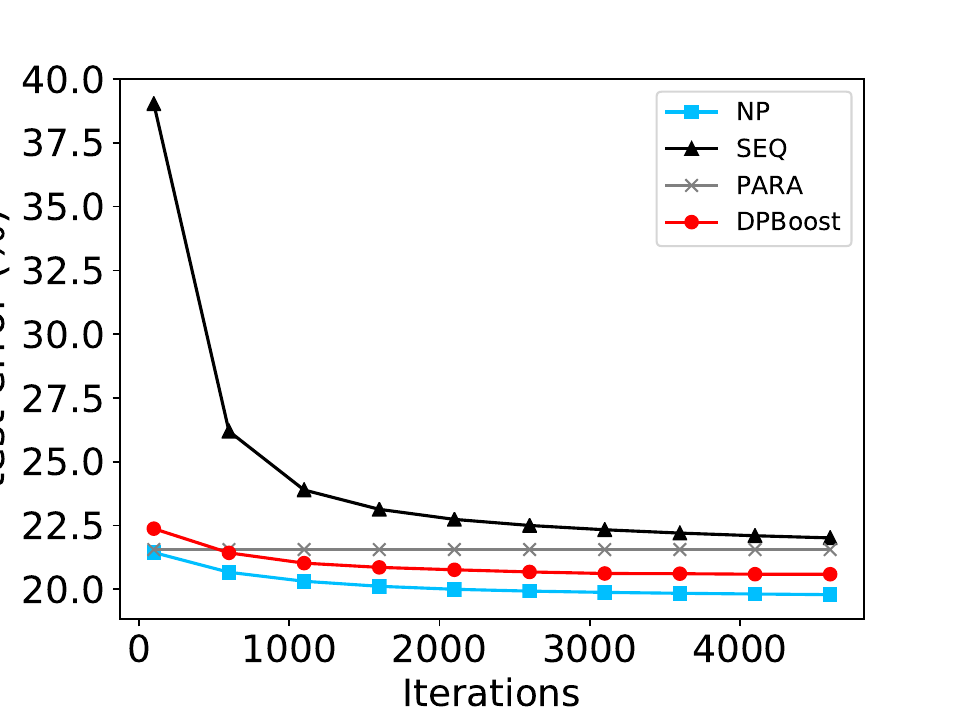}%
}
\subfloat[cod-rna]{\includegraphics[width=0.2\textwidth]{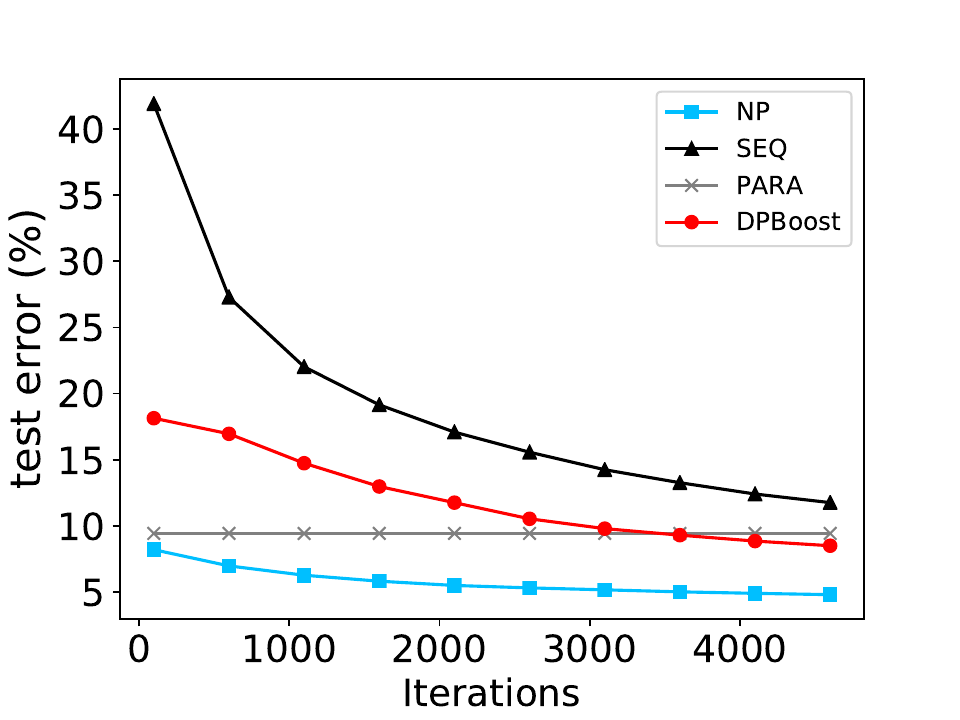}%
}
\caption{Comparison of test error convergence. The number of trees is set to 1000. The number of trees inside an ensemble is set to 20.}
\label{fig:convergence_20}
\end{figure*}

\begin{figure*}
\centering
\subfloat[adult]{\includegraphics[width=0.2\textwidth]{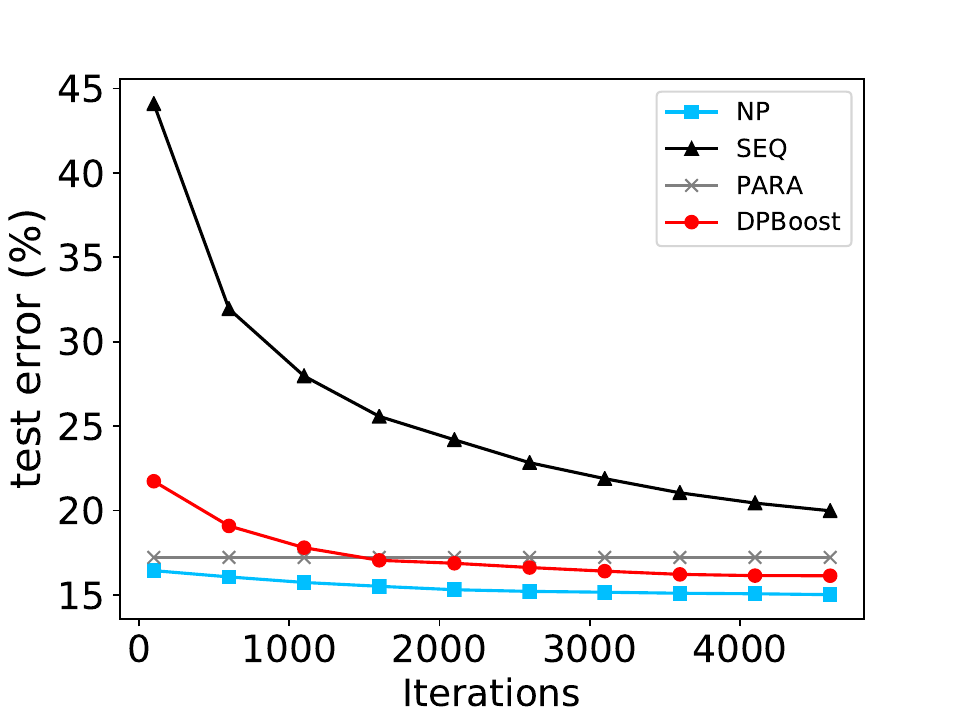}%
}
\subfloat[real-sim]{\includegraphics[width=0.2\textwidth]{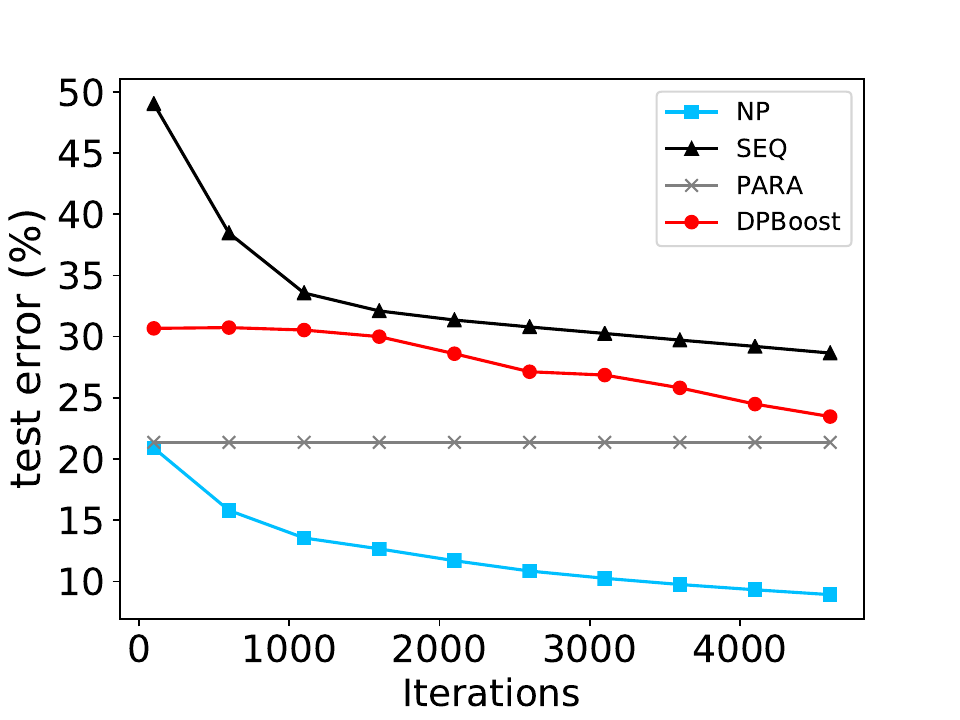}%
}
\subfloat[covtype]{\includegraphics[width=0.2\textwidth]{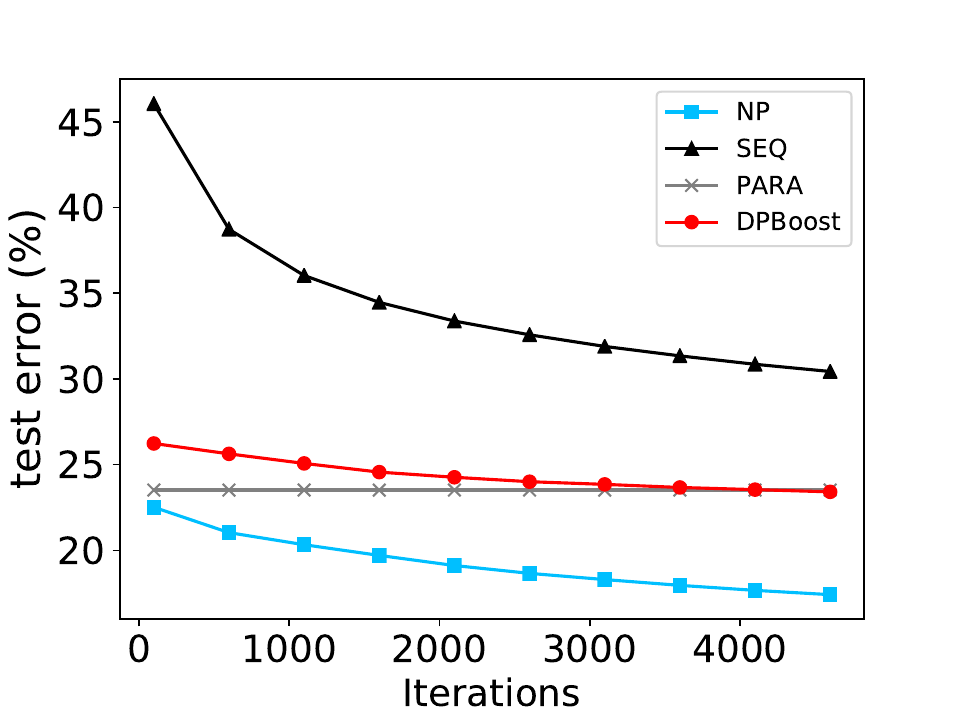}%
}
\subfloat[susy]{\includegraphics[width=0.2\textwidth]{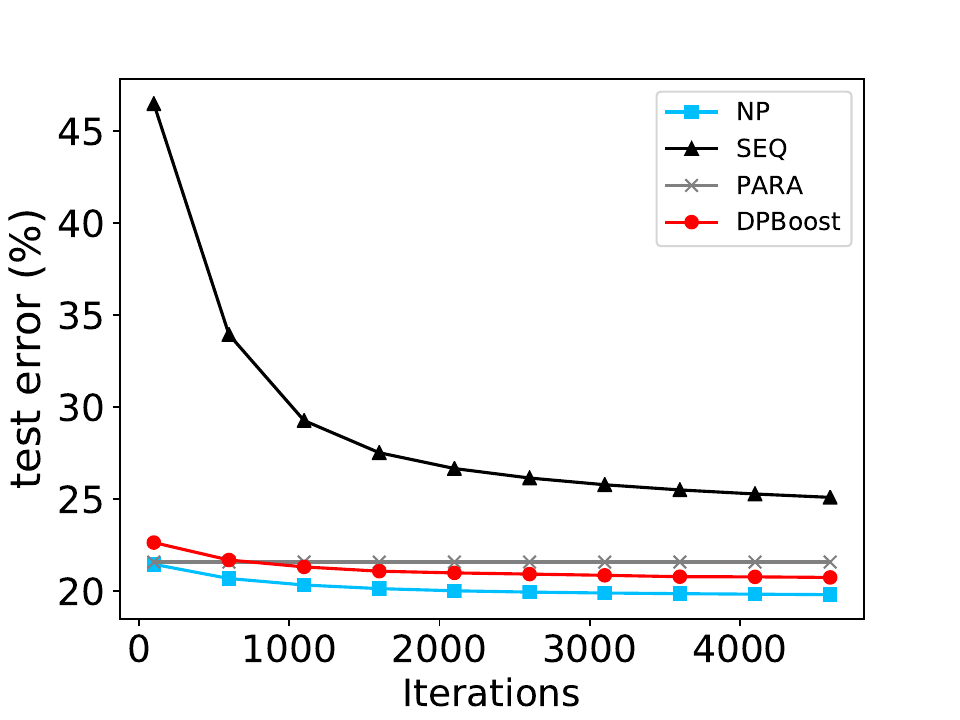}%
}
\subfloat[cod-rna]{\includegraphics[width=0.2\textwidth]{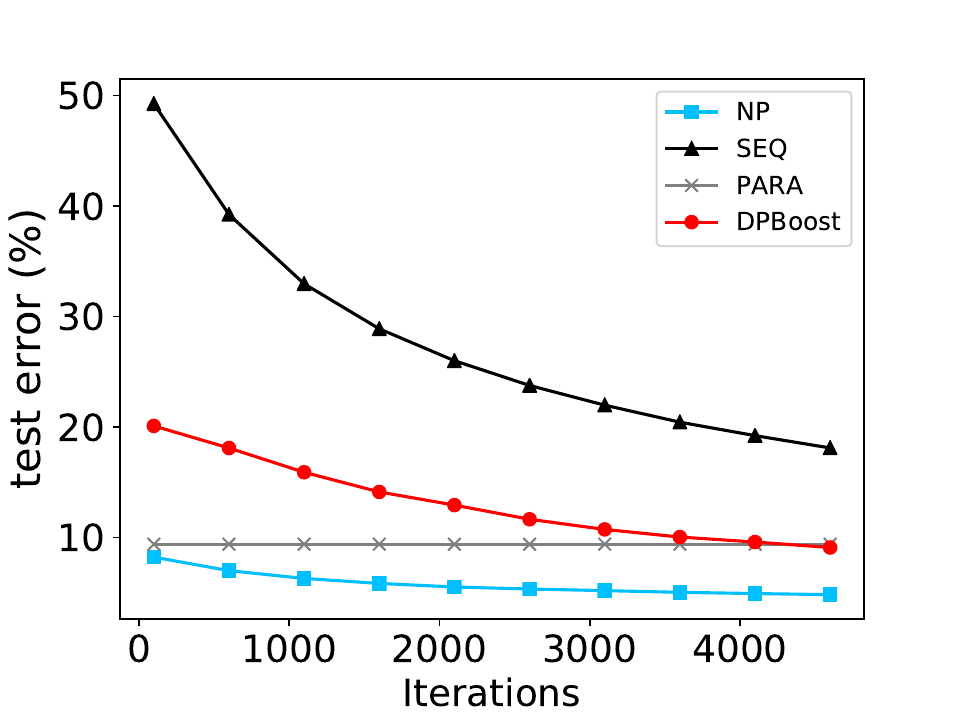}%
}
\caption{Comparison of test error convergence. The number of trees is set to 1000. The number of trees inside an ensemble is set to 40.}
\label{fig:convergence_40}
\end{figure*}

% Figure 

% \null
% \vfill

% change inner\_boost\_round to 20, 40

% in section 4.1, n\_trees = inner\_boost\_round

% 4.2, n\_trees = 20 * inner\_boost\_round / 1000
\end{document}